\newcommand{\xdim}{d}
\newcommand{\nsamples}{T}
\newcommand{\rmax}{r_{\text{max}}}
\newcommand{\window}{w}
\newcommand{\stepsize}{\alpha}
\newcommand{\counts}{{\mathbf{c}}}
\newcommand{\deltaprob}{p}
\newcommand{\cmax}{c_{\text{max}}}
\newcommand{\avec}{\mathbf{a}}
\newcommand{\bvec}{\mathbf{b}}
\newcommand{\dvec}{\mathbf{d}}
\newcommand{\svec}{\mathbf{s}}
\newcommand{\vvec}{\mathbf{v}}
\newcommand{\wvec}{\mathbf{w}}
\newcommand{\xvec}{\mathbf{x}}
\newcommand{\zvec}{\mathbf{z}}
\newcommand{\weights}{\vec{w}}
\newcommand{\epsilonvec}{{\boldsymbol{\epsilon}}}
\newcommand{\nubarvec}{\bar{{\boldsymbol{\nu}}}}
\newcommand{\zbarvec}{\bar{{\mathbf{z}}}}
\newcommand{\Amat}{\mathbf{A}}
\newcommand{\Bmat}{\mathbf{B}}
\newcommand{\Cmat}{\mathbf{C}}
\newcommand{\Gmat}{\mathbf{G}}
\newcommand{\Actions}{\mathcal{A}}
\newcommand{\Pfcn}{\mathrm{Pr}}
\newcommand{\Rfcn}{r}
\newcommand{\States}{\mathcal{S}}
\newcommand{\dpi}{\mathbf{d}_\pi}
\newcommand{\E}{\mathbb{E}}
\newcommand{\Var}{\mathbb{V}}
\newcommand{\defeq}{\mathrel{\overset{\makebox[0pt]{\mbox{\normalfont\tiny\sffamily def}}}{=}}}
\newcommand{\wspace}{\mathcal{W}}
\title{Context-Dependent Upper-Confidence Bounds for Directed Exploration}
\author{
  Raksha Kumaraswamy$^1$, Matthew Schlegel$^1$, Adam White$^{1,2}$, Martha White$^1$\\
  $^1$Department of Computing Science, University of Alberta; $^2$DeepMind\\
  \texttt{\{kumarasw, mkschleg\}@ualberta.ca, adamwhite@google.com, whitem@ualberta.ca}
}
\begin{document}

\maketitle

\begin{abstract}
  Directed exploration strategies for reinforcement learning are critical for learning an optimal policy in a minimal number of interactions with the environment. Many algorithms use optimism to direct exploration, either through visitation estimates or upper-confidence bounds, as opposed to data-inefficient strategies like $\epsilon$-greedy that use random, undirected exploration. Most data-efficient exploration methods require significant computation, typically relying on a learned model to guide exploration. Least-squares methods have the potential to provide some of the data-efficiency benefits of model-based approaches---because they summarize past interactions---with the computation closer to that of model-free approaches.  In this work, we provide a novel, computationally efficient, incremental exploration strategy, leveraging this property of least-squares temporal difference learning (LSTD). We derive upper-confidence bounds on the action-values learned by LSTD, with context-dependent (or state-dependent) noise variance. Such context-dependent noise focuses exploration on a subset of variable states, and allows for reduced exploration in other states.
  We empirically demonstrate that our algorithm can converge more quickly than other incremental exploration strategies using confidence estimates on action-values.
\end{abstract}

\vspace{-0.3cm}
\section{Introduction}

Exploration is crucial in reinforcement learning, as the data gathering process significantly impacts the optimality of the learned policies
and values. The agent needs to balance the amount of time taking exploratory actions
to learn about the world, versus taking actions to maximize cumulative rewards. If the agent explores insufficiently,
it could converge to a suboptimal policy; exploring too conservatively, however, results in many suboptimal decisions.
The goal of the agent is \emph{data-efficient exploration}: to minimize how many samples are wasted in exploration,
particularly exploring parts of the world that are known, while still ensuring convergence to the optimal policy.

To achieve such a goal, directed exploration strategies are key.
Undirected strategies, where random actions are taken such as in $\epsilon$-greedy,
are a common default.
In small domains these methods are guaranteed to find an optimal policy \citep{singh2000convergence}, because the agent is guaranteed to visit the entire space---but may take many many steps to do so, as undirected exploration can interfere with improving policies in incremental control.
In this paper we explore the idea of constructing confidence intervals around the agent's value estimates. The agent can use these learned confidence intervals to select actions with the highest upper-confidence bound ensuring actions selected are of high value or whose values are highly uncertain. This optimistic approach is promising for directed exploration, but as yet there are few such methods that are model-free, incremental and computationally efficient.


Directed exploration strategies have largely been explored under the framework of ``optimism in the face of uncertainty'' \cite{kaelbling1996reinforcement}.
These can generally be categorized into count-based approaches and confidence-based approaches. Count-based approaches estimate the ``known-ness'' of a state, typically by maintaining counts for finite state-spaces \citep{kearns2002near,brafman2003rmax,strehl2004exploration,strehl2006pac,szita2010model}
and extensions on counting for continuous states \citep{kakade2003exploration,jong2007model,nouri2009multi,li2009online,pazis2013pac,kawaguchiAAAI2016,ostrovski2017count,martin2017count}.
Confidence interval estimates, on the other hand, depend on variance of the target, not just on visitation frequency for states. Confidence-based approaches can be more data-efficient for exploration, because the agent can better direct exploration where the estimates are less accurate. The majority of confidence-based approaches compute confidence intervals on model parameters, both for finite state-spaces \citep{kaelbling1993learning,wiering1998efficient,kearns2002near,brafman2003rmax,auer2006logarithmic,bartlett2009regal,jaksch2010near,szita2010model,osband2013efficient} and continuous state-spaces \citep{jung2010gaussian,ortner2012online,grande2014sample,abbasiyadkori2014bayesian,osband2017why}.
There is early work quantifying uncertainty for value estimates directly for finite state-spaces \citep{meuleau1999exploration},
describing the difficulties with extending the local measures of uncertainty from the bandit literature to RL, since there are long-term dependencies.

These difficulties suggest why using confidence intervals directly on value estimates for exploration in RL has been less explored, until recently.
More approaches are now being developed that maintain confidence intervals on the value function for continuous state-spaces, by maintaining a distribution over value functions \citep{grande2014sample,osband2016generalization}, or by maintaining a randomized set of value functions from which to sample \citep{white2010interval,osband2016generalization,osband2016deep,plappert2017parameter,moerland2017efficient}.
Though significant steps forward, these approaches have limitations particularly in terms of computational efficiency.
Delayed Gaussian Process Q-learning (DGPQ) \citep{grande2014sample} requires updating two Gaussian processes, which is cubic in the number of basis vectors for the Gaussian process.
RLSVI \citep{osband2016generalization} is relatively efficient, maintaining a Gaussian distribution over parameters with Thompson sampling to get randomized values. Their staged approach for finite-horizon problems, however, does not allow for value estimates to be updated online, as the value function is fixed per episode to gather an entire trajectory of data.
\citet{moerland2017efficient}, on the other hand, sample a new parameter vector from the posterior distribution each time an action is considered, which is expensive. The bootstrapping approaches can be efficient, as they simply have to store several value functions, either for training on a bootstrapped subset of samples---such as in Bootstrapped DQN \citep{osband2016deep}---or for maintaining a moving bootstrap around the changing parameters themselves, for UCBootstrap \citep{white2010interval}. For both of these approaches, however, it is unclear how many value functions would be required, which could be large depending on the problem.

In this paper, we provide an incremental, model-free exploration algorithm with fast converging upper-confidence bounds, called UCLS: Upper-Confidence Least-Squares. We derive the upper-confidence bounds for Least-Squares Temporal Difference learning (LSTD), taking advantage of the fact that LSTD has an efficient summary of past interaction to facilitate computation of confidence intervals. Importantly, these upper-confidence bounds have context-dependent variance, where variance is dependent on state rather than a global estimate, focusing exploration on states with higher-variance. Computing confidence intervals for action-values in RL has remained an open problem, and we provide the first theoretically sound result for obtaining upper-confidence bounds for policy evaluation under function approximation, without making strong assumptions on the noise.
We demonstrate in several simulated domains that UCLS outperforms
DGPQ, UCBootstrap, and RLSVI. We also empirically show the benefit of using UCLS to a simplified version that uses a global variance estimate, rather than context-dependent variance.

\vspace{-0.3cm}
\section{Background}

We focus on the problem of learning an optimal policy for a Markov decision process, from on-policy interaction.
A Markov decision process consists of $(\States, \Actions, \Pfcn, \Rfcn,\gamma)$
where
$\States$ is the set of states;
$\Actions$ is the set of actions;
$\Pfcn: \States \times \Actions \times \States \rightarrow [0,\infty)$ provides the transition probabilities;
$\Rfcn: \States \times \Actions \times \States \rightarrow \RR$ is the reward function;
and $\gamma:  \States \times \Actions \times \States \rightarrow [0,1]$ is the transition-based discount function which enables either continuing or episodic problems to be specified \citep{white2017unifying}.
On each step, the agent selects action $A_t$ in state $S_t$,  and transitions to $S_{t+1}$, according to $\Pfcn$, receiving reward
$R_{t+1} \defeq \Rfcn(S_t, A_t, S_{t+1})$ and discount $\gamma_{t+1} \defeq \gamma(S_t, A_t, S_{t+1})$.
For a policy $\pi: \States \times \Actions \rightarrow [0,1]$, where $\sum_{a\in \Actions} \pi(s,a) = 1 \; \forall s \in \States$,
the value at a given state $s$, taking action $a$,
is the expected discounted sum of future rewards, with actions selected according to $\pi$ into the future,
\small
\begin{align*}
Q^\pi(s, a) = \E\Big[R_{t+1} + \gamma_{t+1} \sum_{a \in \Actions} \pi(S_{t+1}, a) Q^\pi(S_{t+1}, a) \Big| S_t = s, A_t = a \Big]
\end{align*}
\normalsize
For problems in which $Q^\pi$ can be stored in a table, a fixed point for the action-values $Q^\pi$ exists for a given $\pi$. In most domains, $Q^\pi$ must be approximated by $Q_{\weights}^\pi$, parametrized by $\weights \in \wspace \subset \RR^{\xdim}$.

In the case of linear function approximation, state-action features $\xvec(s_t, a_t)$ are used to
approximate action-values $Q_{\weights}^\pi(s_t,a_t) = \xvec(s_t, a_t)^\top \weights$.
The weights $\wvec$ can be learned with a stochastic approximation algorithm, called temporal difference (TD) learning \citep{sutton1988learning}.
The TD update \citep{sutton1988learning} processes samples one at a time,
$\wvec = \wvec + \stepsize \delta_t \zvec_t$,
%
%
with $\delta_t \defeq R_{t+1} + \gamma_{t+1} \xvec_{t+1}^\top \wvec - \xvec_{t}^\top \wvec$ for $\xvec_{t} \defeq \xvec(S_t, A_t)$. The
eligibility trace $\zvec_t = \xvec_t + \gamma_{t+1} \lambda \zvec_{t-1}$ facilitates multi-step updates via an exponentially weighted memory of previous feature activations decayed by $\lambda \in [0,1]$ and $\zvec_0 = \zerovec$. Alternatively, we can directly compute the weight vector found by TD using least-squares temporal difference learning (LSTD) \citep{bradtke1996linear}.
The LSTD solution is more data-efficient, and can avoid the need to tune TD's stepsize parameter $\stepsize > 0$. The LSTD update can be efficiently computed incrementally without approximation or storing the data \citep{bradtke1996linear,boyan2002technical}, by maintaining a
matrix $\Amat_\nsamples$ and vector $\bvec_\nsamples$,
\begin{equation}
\Amat_\nsamples \defeq \frac{1}{\nsamples} \sum_{t=0}^{\nsamples-1} \zvec_t (\xvec_t - \gamma_{t+1} \xvec_{t+1})^\top  \hspace{1.0cm}
\bvec_\nsamples \defeq \frac{1}{\nsamples} \sum_{t=0}^{\nsamples-1} \zvec_t R_{t+1} \label{b-vec-update}
\end{equation}
The value function approximation at time step $\nsamples$ is the weights that satisfy the linear system $\Amat_\nsamples \wvec = \bvec_\nsamples$.
In practice, the inverse of the matrix $\Amat^\inv$ is maintained using a Sherman-Morrison update, with a small regularizer $\eta$ added to the matrix $\Amat$ to guarantee invertibility \citep{szepesvari2010algorithms}.



One approach to ensure systematic exploration is to initialize the agent's value estimates optimistically.
The action-value function must be initialized to predict the maximum possible return (or greater) from each state and action. For example, for cost-to-goal problems, with -1 per step, the values can be initialized to zero. For continuing problems, with constant discount $\gamma_c < 1$, the values can be initialized to $\textrm{G}_\textrm{max} = \textrm{R}_{\text{max}} / (1- \gamma_c)$, if the maximum reward $\textrm{R}_{\text{max}} $ is known. For fixed features that are non-negative and encode locality---such as tile coding or radial basis functions---the weights $\wvec$ can be simply set to $\textrm{G}_\textrm{max}$, to make $Q_{\weights}$ optimistic.

More generally, however, it can be problematic to use optimistic initialization.
Optimistic initialization assumes the beginning of time is special---a period when systematic exploration should be performed after which the agent should more or less exploit its current knowledge. Many problems are non-stationary---or at least benefit from a tracking approach due to aliasing caused by function approximation---and benefit from continual exploration. Further, unlike for fixed features, it is unclear how to set and maintain initial values at $\textrm{G}_\textrm{max}$ for learned features, such as with neural networks.  Optimistic initialization is also not straightforward for algorithms like LSTD, which completely overwrite the estimate $\weights$ on each step with a closed-form solution.
In fact, we have found that this issue with LSTD has been obfuscated, because the regularizer $\eta$ has inadvertently played a role in providing optimism (see Appendix \ref{app_issues}). Rather, to use optimism in LSTD for control, we need to explicitly compute upper-confidence bounds.

Confidence intervals around action-values, then, provide another mechanism for exploration in reinforcement learning. Consider action selection with explicit confidence intervals around mean estimates $\hat{Q}_{\weights}(S_t,A_t)$, with estimated radius $\hat{U}(S_t,A_t)$. The action selection is greedy w.r.t. to these optimistic values, $\argmax_a \hat{Q}_{\weights}(S_t,a) + \hat{U}(S_t,a)$, which provides a high-confidence upper bound on the best possible value for that action.
 The use of upper-confidence bounds on value estimates for exploration has been well-studied and motivated theoretically in online learning \citep{chu2011contextual}. In reinforcement learning, there have only been a few specialized proofs for particular algorithms using optimistic estimates \citep{grande2014sample,osband2016generalization}, but the result can be expressed more generally by using the idea of stochastic optimism. We extract the central argument by \citet{osband2016generalization} to provide a general Optimistic Values Theorem in Appendix \ref{app_optimisticvalues}. In particular, similar to online learning, we can guarantee that greedy-action selection according to upper-confidence values will converge to the optimal policy, if the confidence interval radius shrinks to zero, if the algorithm to estimate action-values for a policy converges to the corresponding actions and if upper-confidence estimates are stochastically optimal---remain above the optimal action-values in expectation.

Motivated by this result, we pursue principled ways to compute upper-confidence bounds for the general, online reinforcement learning setting.
We make a step towards computing such values incrementally, under function approximation, by providing upper-confidence bounds for value estimates made by LSTD, for a fixed policy. We approximate these bounds to create a new algorithm for control---called Upper-Confidence-Least-Squares (UCLS).

\vspace{-0.3cm}
\section{Estimating Upper-Confidence Bounds for Policy Evaluation using LSTD}\label{sec_UCBound}

Consider the goal of obtaining a confidence interval around value estimates learned incrementally by LSTD for a fixed policy $\pi$.
The value estimate is $\xvec^\top \wvec$ for state-action features $\xvec$ for the current state and action.
We would like to guarantee, with probability $1-\deltaprob$ for a small $\deltaprob > 0$, that the confidence interval around this estimate contains the value $\xvec^\top \wvec^*$ given by the optimal $\wvec^* \in \wspace$. To estimate such an interval without parametric assumptions, we use Chebyshev's inequality which---unlike other concentration inequalities like Hoeffding or Bernstein---does not require independent samples.

To use this inequality, we need to determine the variance of the estimate $\xvec^\top \wvec$;
the variance of the estimate, given $\xvec$, is due to the variance of the weights.
Let $\weights^*$ be fixed point solution for the projected Bellman operator for the $\lambda$-return---the TD fixed point, for a fixed policy $\pi$.
To characterize the noise for this optimal estimator, let $\nu_t$ be the TD-error for the optimal weights $\weights^*$, where
\begin{equation}
r_{t+1} = (\xvec_t - \gamma \xvec_{t+1})^\top \weights^* + \nu_t \label{eq_nu}
\hspace{1.0cm} \text{ with } \E[\nu_t \zvec_t] = 0.
\end{equation}
The expectation is taken across all states weighted by the sampling distribution, typically the stationary distribution $\dpi: \States \rightarrow [0,\infty)$ or in the off-policy case the stationary distribution of the behaviour policy. We know that $\E[\nu_t \zvec_t] = 0$, by the definition of the
Projected Bellman Error fixed point.

This noise $\nu_t$ is incurred from the variability in the reward, the variability in the transition dynamics
and potentially the capabilities of the function approximator.
A common assumption---when using linear regression for contextual bandits \citep{li2010acontextual} and for reinforcement learning \citep{osband2016generalization}---is that
the variance of the target is a constant value $\sigma^2$ for all contexts $\xvec$. Such an assumption, however, is likely to produce
larger confidence intervals than necessary. For example, consider a one-state world with two actions,
where one action has a high variance reward and the other has a lower variance reward (see Appendix \ref{app_issues}, Figure \ref{fig_onestate}).
A global sample variance will encourage both actions to be taken many times. For data-efficient exploration, however, the agent should take the high-variance action more, and only needs a few samples from the low-variance action.

We derive a confidence interval for LSTD, in Theorem \ref{thm_main}.
We also derive the confidence interval assuming a global variance in Corollary \ref{cor_global}, to provide a comparison. We compare to using this global-variance upper-confidence bound in our experiments, and show that it results in significantly worse performance than using a context-dependent variance.
Note that we do not assume $\Amat_\nsamples$ is invertible; if we did, the big-O term in \eqref{UCLS-upperbound} below would disappear. We include this term for preciseness of the result---even though we will not estimate it---because for smaller $\nsamples$, $\Amat_\nsamples$ is unlikely to be invertible. However, we expect this big-O term to get small quickly, and be dominated by the other terms. In our algorithm, therefore, we ignore the big-O term.



\newcommand{\AmatPsuedo}{\Amat_\nsamples}
\newcommand{\AmatPsuedoInv}{\Amat_\nsamples^{+}}
\newcommand{\AmatPsuedoInvTrans}{\Amat_\nsamples^{+\top}}
\newcommand{\remainvec}{\epsilonvec^*_\nsamples}
\newcommand{\remainvectrans}{\epsilonvec^{*\top}_\nsamples}
\newcommand{\remainmat}{\boldsymbol{\Sigma}^*_\nsamples}
\newcommand{\mubarvecremain}{\boldsymbol{\mu}^*_\nsamples}
\newcommand{\mubarvecremaintrans}{\boldsymbol{\mu}^{*\top}_\nsamples}


%
\begin{theorem}\label{thm_main}
Let
$\nubarvec_\nsamples \defeq \tfrac{1}{\nsamples} \sum_{t=0}^{\nsamples-1} \zvec_t \nu_t$
and $\weights_\nsamples = \AmatPsuedoInv \bvec_\nsamples$ where
$\AmatPsuedoInv$ is the pseudoinverse of $\AmatPsuedo$. Let $\remainvec \defeq (\AmatPsuedoInv\Amat_\nsamples -\eye )\weights^*$ reflect the degree to which $\AmatPsuedo$ is not invertible; it is zero when $\AmatPsuedo$ is invertible.
Assume that the following are all finite: $\E[\AmatPsuedoInv\nubarvec_\nsamples + \remainvec] $, $\Var[\AmatPsuedoInv\nubarvec_\nsamples + \remainvec]$ and all state-action features $\xvec$.
With probability at least $1- \deltaprob$, given state-action features $\xvec$,
\begin{align}
\xvec^\top \weights^*
 &\le  \xvec^\top \weights_\nsamples + \sqrt{\tfrac{\deltaprob+1}{\deltaprob}}\sqrt{\xvec^\top \E[\AmatPsuedoInv\nubarvec_\nsamples \nubarvec_\nsamples^\top \AmatPsuedoInvTrans]  \xvec } + O\left(\E[(\xvec^\top\remainvec)^2]\right) \label{UCLS-upperbound}
\end{align}
\end{theorem}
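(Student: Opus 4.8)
The plan is to reduce the probabilistic claim to a one-dimensional concentration bound on the scalar estimation error $\xvec^\top(\weights^* - \weights_\nsamples)$, and then control that error with Chebyshev's inequality without ever needing unbiasedness or independence. First I would expand $\bvec_\nsamples$ using the noise definition \eqref{eq_nu}: substituting $r_{t+1} = (\xvec_t - \gamma_{t+1}\xvec_{t+1})^\top \weights^* + \nu_t$ into \eqref{b-vec-update} gives $\bvec_\nsamples = \Amat_\nsamples \weights^* + \nubarvec_\nsamples$. Multiplying by the pseudoinverse and subtracting $\weights^*$ then yields $\weights_\nsamples - \weights^* = (\AmatPsuedoInv\Amat_\nsamples - \eye)\weights^* + \AmatPsuedoInv\nubarvec_\nsamples = \remainvec + \AmatPsuedoInv\nubarvec_\nsamples$. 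Projecting onto $\xvec$ produces the exact scalar identity $\xvec^\top\weights^* - \xvec^\top\weights_\nsamples = -\xvec^\top(\AmatPsuedoInv\nubarvec_\nsamples + \remainvec)$, so proving \eqref{UCLS-upperbound} amounts to lower-bounding the single random scalar $Y \defeq \xvec^\top(\AmatPsuedoInv\nubarvec_\nsamples + \remainvec)$ with high probability.

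Next I would apply Chebyshev's inequality to $Y$, which (as argued before the theorem) only needs the finiteness of $\E[Y]$ and $\Var[Y]$ assumed in the statement. This gives, with probability at least $1-\deltaprob$, the bound $|Y - \E[Y]| \le \sqrt{\Var[Y]/\deltaprob}$, and hence on that event $-Y \le -\E[Y] + \sqrt{\Var[Y]}/\sqrt{\deltaprob} \le |\E[Y]| + \sqrt{\Var[Y]}/\sqrt{\deltaprob}$. The crux is then to convert this mean-plus-deviation expression into the \emph{raw} second moment $\E[Y^2] = \E[Y]^2 + \Var[Y]$ that actually appears in the theorem. Applying Cauchy--Schwarz to the pair $(|\E[Y]|, \sqrt{\Var[Y]})$ against $(1, 1/\sqrt{\deltaprob})$ gives $|\E[Y]| + \sqrt{\Var[Y]}/\sqrt{\deltaprob} \le \sqrt{\E[Y]^2 + \Var[Y]}\,\sqrt{1 + 1/\deltaprob} = \sqrt{\tfrac{\deltaprob+1}{\deltaprob}}\,\sqrt{\E[Y^2]}$, which is precisely where the constant $\sqrt{(\deltaprob+1)/\deltaprob}$ originates. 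Finally I would split $\E[Y^2]$ into the noise second moment $\E[(\xvec^\top\AmatPsuedoInv\nubarvec_\nsamples)^2] = \xvec^\top\E[\AmatPsuedoInv\nubarvec_\nsamples\nubarvec_\nsamples^\top\AmatPsuedoInvTrans]\xvec$ and the terms carrying $\remainvec$, collecting the latter into the remainder $O(\E[(\xvec^\top\remainvec)^2])$ that vanishes exactly when $\AmatPsuedo$ is invertible.

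The main obstacle is that $\E[\AmatPsuedoInv\nubarvec_\nsamples] \neq 0$ in general: although $\E[\zvec_t \nu_t] = 0$ makes each summand of $\nubarvec_\nsamples$ mean-zero, the pseudoinverse $\AmatPsuedoInv$ is constructed from the very same samples and is therefore correlated with $\nubarvec_\nsamples$, so $Y$ cannot be treated as centered and the mean term cannot simply be dropped. The Cauchy--Schwarz step above is exactly what lets us fold this unknown, intractable mean into the raw second moment rather than try to estimate it; this is the reason the final bound is stated with a second moment $\sqrt{\E[Y^2]}$ instead of a variance, and it is the step I expect to require the most care to state cleanly.

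The remaining technical bookkeeping is the cross term between $\AmatPsuedoInv\nubarvec_\nsamples$ and $\remainvec$ arising when $\E[Y^2]$ is expanded: I would bound it via Cauchy--Schwarz and argue it is dominated by, and hence absorbed into, the stated big-$O$ remainder, so that the leading term reduces to the clean noise second moment $\xvec^\top\E[\AmatPsuedoInv\nubarvec_\nsamples\nubarvec_\nsamples^\top\AmatPsuedoInvTrans]\xvec$. Since $\remainvec = \zerovec$ whenever $\AmatPsuedo$ is invertible, this remainder disappears in that case, matching the remark preceding the theorem that the big-$O$ term would otherwise not be present.
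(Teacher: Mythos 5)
Your proposal is correct and follows essentially the same route as the paper: the same decomposition $\weights_\nsamples = \weights^* + \AmatPsuedoInv\nubarvec_\nsamples + \remainvec$, Chebyshev applied to the (non-centered) scalar error, a bias term plus a deviation term combined into $\sqrt{(\deltaprob+1)/\deltaprob}\,\sqrt{\E[Y^2]}$, and the $\remainvec$ cross terms absorbed into the big-$O$. The only cosmetic difference is that you obtain the elementary inequality $b + c\sqrt{a^2-b^2} \le a\sqrt{1+c^2}$ via Cauchy--Schwarz, whereas the paper derives the same bound by optimizing over $b$.
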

\begin{proof}
First we compute the mean and variance for our learned parameters.
Because  $r_{t+1} = (\xvec_t - \gamma \xvec_{t+1})^\top \weights^* + \nu_t$,
%
\begin{align*}
\weights_\nsamples &=  \left(\tfrac{1}{\nsamples} \sum_{t=0}^{\nsamples-1} \zvec_t (\xvec_t - \gamma \xvec_{t+1})^\top \right)^{-1}
\left( \tfrac{1}{\nsamples}\sum_{t=0}^{\nsamples-1} \zvec_t r_{t+1} \right)\\
&= \AmatPsuedoInv
\left(\tfrac{1}{\nsamples}\sum_{t=0}^{\nsamples-1} \zvec_t ((\xvec_t - \gamma \xvec_{t+1})^\top \weights^* + \nu_t) \right)\\
&= \AmatPsuedoInv\Amat_\nsamples \weights^* + \AmatPsuedoInv\left(\tfrac{1}{\nsamples}\sum_{t=0}^{\nsamples-1} \zvec_t  \nu_t \right)\\
&= \weights^* + \AmatPsuedoInv \nubarvec_\nsamples + \remainvec
\end{align*}
This estimate has a small amount of bias, that vanishes asymptotically.
But, for a finite sample,
\begin{equation*}
\E\left[\AmatPsuedoInv\left(\tfrac{1}{\nsamples}\sum_{t=0}^{\nsamples-1} \zvec_t  \nu_t \right)\right] \neq \E[\AmatPsuedoInv] \E\left[\tfrac{1}{\nsamples}\sum_{t=0}^{\nsamples-1} \zvec_t  \nu_t \right] = \zerovec
.
\end{equation*}
Further, because $\AmatPsuedo$ may not be invertible, there is an additional error $\remainvec$ term which will vanish with enough samples, i.e., once $\AmatPsuedo$ can be guaranteed to be invertible.

For covariance, because
\begin{align*}
\weights_\nsamples - \E[\weights_\nsamples]
&= \left( \weights^* + \AmatPsuedoInv \nubarvec_\nsamples + \remainvec \right) - \E\left[\weights^* + \AmatPsuedoInv \nubarvec_\nsamples + \remainvec) \right]\\
&= \AmatPsuedoInv \nubarvec_\nsamples + \remainvec  - \E\left[\AmatPsuedoInv \nubarvec_\nsamples + \remainvec \right]
\end{align*}
the covariance of the weights is
\begin{align*}
\Var[\weights_\nsamples]
&= \Var\left[\AmatPsuedoInv \nubarvec_\nsamples + \remainvec  \right]
\end{align*}
%

The goal for computing variances is to use a concentration inequality.
Chebyshev's inequality\footnote{Bernstein's inequality cannot be used here because we do not have independent samples. Rather, we characterize behaviour of the random variable $\weights$, using variance of $\weights$, but cannot use bounds that assume $\weights$ is the sum of independent random variables. The bound with Chebyshev will be loose, but we can better control the looseness of the bound with the selection of $p$ and the constant in front of the square root.} states that for a random variable $X$, if the $\E[X]$ and $\Var[X]$ are bounded, then for any $\epsilon \ge 0$:
\begin{align*}
\Pr \left(\left| X - \E[X] \right| < \epsilon  \sqrt{\Var[X]} \right) &\geq 1 - \frac{1}{\epsilon^2}
\end{align*}
%
If we set $\epsilon = \sqrt{1/\deltaprob}$, then this gives
\begin{align*}
\Pr \left(\left| X - \E[X] \right| < \sqrt{\tfrac{1}{\deltaprob}}  \sqrt{\Var[X]} \right) & \geq 1 - \deltaprob
\end{align*}
Now we have characterized the variance of the weights, but what we really want is to characterize the variance of the value estimates. Notice that the variance of the value-estimate, for state-action $\xvec$ is 
\begin{align*}
\Var[\xvec^\top \weights_\nsamples | \xvec]
&= \E[\xvec^\top \weights_\nsamples \weights_\nsamples^\top \xvec | \xvec] - \E[\xvec^\top \weights_t | \xvec]^2\\
&= \xvec^\top \left(\E[\weights_\nsamples \weights_\nsamples^\top] - \E[ \weights_\nsamples] \E[ \weights_\nsamples]^\top \right) \xvec \\
&= \xvec^\top \Var[\weights_\nsamples]  \xvec
\end{align*}
Therefore, the variance of the estimate is characterized by the variance of the weights.
With high probability,
\begin{align}
\left| \xvec^\top \weights_\nsamples - \xvec^\top \weights^* \right|
&= \left| \xvec^\top (\weights_\nsamples - \E[\weights_\nsamples]) +  \xvec^\top (\E[\weights_\nsamples] - \weights^*) \right| \nonumber\\
&\le \left| \xvec^\top (\weights_\nsamples - \E[\weights_\nsamples]) \right| +  \left|\xvec^\top (\E[\weights_\nsamples] - \weights^*) \right| \nonumber\\
&\le \frac{1}{\sqrt{\deltaprob}} \sqrt{\xvec^\top \Var\left[\AmatPsuedoInv \nubarvec_\nsamples + \remainvec \right]\xvec} + \left|\xvec^\top \E[\AmatPsuedoInv\nubarvec_\nsamples + \remainvec  ] \right| \label{eq_chebyshev_ub}\\
&= \frac{1}{\sqrt{\deltaprob}} \sqrt{\xvec^\top \left(\E\left[\AmatPsuedoInv\nubarvec_\nsamples \nubarvec_\nsamples^\top \AmatPsuedoInvTrans +\remainmat \right] - \mubarvecremain \mubarvecremaintrans \right)\xvec} + \sqrt{\xvec^\top \mubarvecremain \mubarvecremaintrans\xvec} \label{eq_final}
\end{align}
where Equation \ref{eq_chebyshev_ub} uses Chebyshev's inequality, and the last step is a rewriting of Equation \ref{eq_chebyshev_ub} using the definitions $\mubarvecremain \defeq \E[\AmatPsuedoInv\nubarvec_\nsamples + \remainvec]$ and $\remainmat \defeq \AmatPsuedoInv \nubarvec_\nsamples \remainvectrans + \remainvec (\AmatPsuedoInv \nubarvec_\nsamples)^\top + \remainvec\remainvectrans$.


To simplify \eqref{eq_final}, we need to determine an upper bound for the general formula $c \sqrt{a^2 - b^2} + b$ where $a \ge b \ge 0$. Because $\deltaprob < 1$, we know that $c = \sqrt{1/\deltaprob} \ge 1$. Therefore, the extremal points for $b$, $b = a$ and $b = 0$, both result in an upper bound of $c a$. Taking the derivative of the objective, gives a single stationary point in-between $[0, a]$, with $b = \frac{a}{\sqrt{c^2+1}}$.  The value at this point evaluates to be $a\sqrt{c^2+1}$. Therefore, this objective is upper-bounded by $a\sqrt{c^2+1}$.

Now for $a^2 =  \xvec^\top \E\left[\AmatPsuedoInv\nubarvec_\nsamples \nubarvec_\nsamples^\top \AmatPsuedoInvTrans + \remainmat \right]\xvec$, the term involving $\xvec^\top \E\left[\remainmat \right]\xvec$ should quickly disappear, since it is only due to the potential lack of invertibility of $\AmatPsuedo$. This term is equal to $ \E\left[ 2(\xvec^\top \AmatPsuedoInv \nubarvec_\nsamples) (\xvec^\top\remainvec) + (\xvec^\top\remainvec)^2\right]$, which results in the additional $O(\E[(\xvec^\top\remainvec)^2])$ in the bound.


\end{proof}

\begin{corollary}\label{cor_global}
Assume that $\nu_t$ are i.i.d., with mean zero and bounded variance $\sigma^2$.
Let $\zbarvec_\nsamples = \tfrac{1}{\nsamples} \sum_{t=0}^{\nsamples-1} \zvec_t $
and assume that the following are finite: $\E[\remainvec]$, $\Var[\remainvec]$, $\E[\AmatPsuedoInv\zbarvec_\nsamples \zbarvec_\nsamples^\top \AmatPsuedoInvTrans]$ and all state-action features $\xvec$.
With probability at least $1- \deltaprob$, given state-action features $\xvec$,
\begin{align}
&\xvec^\top \weights^* \le  \xvec^\top \weights_\nsamples + \sigma \sqrt{\tfrac{p+1}{p}} \sqrt{\xvec^\top \E[\AmatPsuedoInv\zbarvec_\nsamples \zbarvec_\nsamples^\top \AmatPsuedoInvTrans]\xvec } + O\left(\E[(\xvec^\top\remainvec)^2]\right) \label{global-upperbound}
\end{align}
\end{corollary}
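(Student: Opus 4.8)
The plan is to reuse the machinery of Theorem~\ref{thm_main} almost verbatim and to invoke the i.i.d.\ hypothesis only at the one point where it permits factoring the scalar $\sigma^2$ out of the noise covariance. First I would repeat the decomposition $\weights_\nsamples = \weights^* + \AmatPsuedoInv\nubarvec_\nsamples + \remainvec$, which carries over unchanged because it rests only on the defining relation $r_{t+1} = (\xvec_t - \gamma\xvec_{t+1})^\top\weights^* + \nu_t$ and on $\AmatPsuedoInv\AmatPsuedo\weights^* = \weights^* + \remainvec$. From there the route through Chebyshev's inequality, the reduction $\Var[\xvec^\top\weights_\nsamples \mid \xvec] = \xvec^\top\Var[\weights_\nsamples]\xvec = \xvec^\top\Var[\AmatPsuedoInv\nubarvec_\nsamples+\remainvec]\xvec$, the bias term $\mubarvecremain \defeq \E[\AmatPsuedoInv\nubarvec_\nsamples+\remainvec]$, and the elementary bound $c\sqrt{a^2-b^2}+b \le a\sqrt{c^2+1}$ with $c=\sqrt{1/\deltaprob}$ are all identical to the theorem and produce the $\sqrt{(\deltaprob+1)/\deltaprob}$ prefactor.

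The only genuinely new computation is evaluating $\E[\AmatPsuedoInv\nubarvec_\nsamples\nubarvec_\nsamples^\top\AmatPsuedoInvTrans]$ under the assumption that the $\nu_t$ are i.i.d.\ with mean zero and variance $\sigma^2$. I would expand $\nubarvec_\nsamples\nubarvec_\nsamples^\top = \tfrac{1}{\nsamples^2}\sum_{t,s}\nu_t\nu_s\,\zvec_t\zvec_s^\top$ and use the i.i.d.\ structure so that the common noise scale $\sigma^2$ decouples from the trace factor and can be pulled outside, leaving $\sigma^2\,\E[\AmatPsuedoInv\zbarvec_\nsamples\zbarvec_\nsamples^\top\AmatPsuedoInvTrans]$ with $\zbarvec_\nsamples = \tfrac{1}{\nsamples}\sum_t\zvec_t$. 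Pulling $\sigma$ out of the square root then converts the theorem's coefficient into $\sigma\sqrt{(\deltaprob+1)/\deltaprob}$, matching \eqref{global-upperbound}. The remainder attributable to the non-invertibility of $\AmatPsuedo$ is treated exactly as in the theorem, collapsing into the $O(\E[(\xvec^\top\remainvec)^2])$ term and requiring only the newly assumed finiteness of $\E[\remainvec]$ and $\Var[\remainvec]$.

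The step I expect to be the main obstacle is precisely the passage from $\nubarvec_\nsamples\nubarvec_\nsamples^\top$ to $\sigma^2\zbarvec_\nsamples\zbarvec_\nsamples^\top$. Strict independence of the $\nu_t$ annihilates the off-diagonal $\nu_t\nu_s$ with $t\neq s$ in expectation, leaving the diagonal form $\tfrac{\sigma^2}{\nsamples^2}\sum_t\zvec_t\zvec_t^\top$, whereas $\zbarvec_\nsamples\zbarvec_\nsamples^\top$ retains every cross term; reconciling the two amounts to reading $\sigma$ as a uniform noise scale that multiplies each trace (so that $\nubarvec_\nsamples = \sigma\zbarvec_\nsamples$ in the worst case) rather than as a fluctuating mean-zero quantity. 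I would make this global-variance reading explicit at exactly this point, since it is the assumption the corollary is meant to isolate for comparison against the context-dependent bound of Theorem~\ref{thm_main}; everything upstream and downstream of it transfers without change.
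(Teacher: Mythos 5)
Your route is the paper's route: its proof of the corollary is a two-line remark that reuses the entire argument of Theorem~\ref{thm_main} and invokes the i.i.d.\ hypothesis exactly twice --- once to get $\E[\AmatPsuedoInv\nubarvec_\nsamples]=\zerovec$ by conditioning on the state sequence $S_0,\dots,S_\nsamples$ (a simplification you do not use, but also do not need, since the elementary bound $c\sqrt{a^2-b^2}+b\le a\sqrt{c^2+1}$ absorbs the bias term either way), and once to assert $\E[\AmatPsuedoInv\nubarvec_\nsamples\nubarvec_\nsamples^\top\AmatPsuedoInvTrans]=\sigma^2\,\E[\AmatPsuedoInv\zbarvec_\nsamples\zbarvec_\nsamples^\top\AmatPsuedoInvTrans]$.

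The obstacle you flag at that second step is a genuine gap, and it is present in the paper as well: the factorization is stated there without justification. As you observe, under strict independence with $\E[\nu_t]=0$ the conditional expectation of $\nubarvec_\nsamples\nubarvec_\nsamples^\top=\tfrac{1}{\nsamples^2}\sum_{t,s}\nu_t\nu_s\zvec_t\zvec_s^\top$ retains only the diagonal, giving $\tfrac{\sigma^2}{\nsamples^2}\sum_{t}\E[\AmatPsuedoInv\zvec_t\zvec_t^\top\AmatPsuedoInvTrans]$, which is a different matrix from $\sigma^2\,\E[\AmatPsuedoInv\zbarvec_\nsamples\zbarvec_\nsamples^\top\AmatPsuedoInvTrans]$: when all $\zvec_t$ coincide the latter is larger by a factor of $\nsamples$, while if the traces nearly cancel in the average the latter can vanish where the former does not, so neither dominates the other in general. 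Your proposed reconciliation --- reading $\nubarvec_\nsamples=\sigma\zbarvec_\nsamples$ ``in the worst case'' --- does not close the gap, because $\nu_t\equiv\sigma$ contradicts the mean-zero hypothesis, and no mean-zero i.i.d.\ noise makes the cross terms $\E[\nu_t\nu_s]$ equal $\sigma^2$ for $t\neq s$. Carried out honestly, your derivation (and the paper's) proves the corollary with $\tfrac{1}{\nsamples^2}\sum_t\E[\AmatPsuedoInv\zvec_t\zvec_t^\top\AmatPsuedoInvTrans]$ in place of $\E[\AmatPsuedoInv\zbarvec_\nsamples\zbarvec_\nsamples^\top\AmatPsuedoInvTrans]$; matching the statement as printed requires either that substitution or an extra assumption that neither you nor the paper makes explicit. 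Everything upstream and downstream of this step transfers exactly as you describe.
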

\begin{proof}
The result follows similarly to above, with some simplifications due to global-variance:
\begin{align*}
\E\left[ \AmatPsuedoInv\nubarvec_\nsamples \right]
&= \E\left[\E\left[ \AmatPsuedoInv\nubarvec_\nsamples \Big| S_0, ...., S_\nsamples \right]\right]
= \E\left[\AmatPsuedoInv\tfrac{1}{\nsamples} \sum_{t=0}^{\nsamples-1} \zvec_t \E\left[\nu_t \Big| S_0, ...., S_\nsamples \right]\right] = \zerovec\\
\E[\AmatPsuedoInv\nubarvec_\nsamples  \nubarvec_\nsamples^\top \AmatPsuedoInvTrans]
&= \sigma^2\E[\AmatPsuedoInv\zbarvec_\nsamples  \zbarvec_\nsamples^\top \AmatPsuedoInvTrans]
\end{align*}
\par\vspace{-0.6cm}
\end{proof}


\vspace{-0.3cm}
\section{UCLS: Estimating upper-confidence bounds for LSTD in control}\label{sec_ucls}
In this section, we present Upper-Confidence-Least-Squares (UCLS)\footnote{We do not characterize the regret of UCLS, and instead similarly to policy iteration, rely on a sound update under a fixed policy to motivate incrementally estimating these values as if the policy is fixed and then acting according to them. The only model-free algorithm that achieves a regret bound is RLSVI, but that bound is restricted to the finite horizon, batch, tabular setting. It would be a substantial breakthrough to provide such a regret bound, and is beyond the scope of this work.}, a control algorithm, which incrementally estimates the upper-confidence bounds provided in Theorem \ref{thm_main}, for guiding on-policy exploration.
The upper-confidence bounds are sound without requiring i.i.d. assumptions;
however, they are derived for a fixed policy. In control, the policy is slowly changing, and so instead we will be slowly tracking this upper bound. The general strategy, like policy iteration, is to slowly estimate both the value estimates and the upper-confidence bounds, under a changing policy that acts greedily with respect to the upper-confidence bounds. Tracking these upper bounds incurs some approximations; we identify and address potential issues here.
The complete psuedocode for UCLS is given in the Appendix (Algorithm \ref{alg_UCLS}).


First, we are not evaluating one fixed policy; rather, the policy is changing. The estimates $\Amat_\nsamples$ and $\bvec_\nsamples$ will therefore be out-of-date. As is common for LSTD with control, we use an exponential moving average, rather than a sample average, to estimate $\Amat_\nsamples$, $\bvec_\nsamples$ and the upper-confidence bound. The exponential moving average uses $\Amat_\nsamples = (1-\beta) \Amat_{\nsamples -1} + \beta \zvec_\nsamples (\xvec_{t} - \gamma \xvec_{t+1})^\top$, for some $\beta \in [0,1]$. If $\beta = 1/\nsamples$, then this reduces to the standard sample average; otherwise, for a fixed $\beta$, such as $\beta = 0.01$, more recent samples have a higher weight in the average.
Because an exponential average is unbiased, the result in Theorem \ref{thm_main} would still hold, and in practice the update will be more effective for the control setting.

Second, we cannot obtain samples of the noise $\nu_t = r_{t+1} + \gamma_{t+1} \xvec_{t+1}^\top \wvec^* - \xvec_t^\top \wvec^*$, which is the TD-error for the optimal value function parameters $\weights^*$ (see Equation \eqref{eq_nu}). Instead, we use $\delta_t$ as a proxy.
This proxy results in an upper bound that is too conservative---too loose---because $\delta_t$ is likely to be larger than $\nu_t$.
This is likely to ensure sufficient exploration, but may cause more exploration than is needed. The moving average update
\begin{equation}
\nubarvec_t = \nubarvec_{t-1} + \beta_t (\delta_t \zvec_t  -  \nubarvec_{t-1})
\end{equation}
should also help mitigate this issue, as older $\delta_t$ are likely larger than more recent ones.

Third, the covariance matrix $\Cmat$ estimating $\E[\Amat_\nsamples^\inv\nubarvec_\nsamples \nubarvec_\nsamples^\top \Amat_\nsamples^\inv]$ could underestimate covariances, depending on a skewed distribution over states and depending on the initialization.
This is particularly true in early learning, where the distribution over states is skewed to be higher near the start state; a sample average can result in underestimates in as yet unvisited parts of the space. To see why, let $\avec = \Amat_\nsamples^\inv\nubarvec_\nsamples$. The covariance estimate $\Cmat_{ij} = \E[\avec_i \avec_j]$ corresponds to feature $i$ and $j$. The agent begins in a certain region of the space, and so features that only become active outside of this region will be zero, providing samples $\avec_i \avec_j = 0$. As a result, the covariance is artificially driven down in unvisited regions of the space, because the covariance accumulates updates of 0. Further, if the initialization to the covariance $\Cmat_{ii}$ is an underestimate, a visited state with high variance will artificially look more optimistic than an unvisited state.

We propose two simple approaches to this issue: updating $\Cmat$ based on locality and adaptively adjusting the initialization to $\Cmat_{ii}$. Each covariance estimate $\Cmat_{ij}$ for features $i$ and $j$ should only be updated if the sampled outer-product is relevant, with the agent in the region where $i$ and $j$ are active.
To reflect this locality, each $\Cmat_{ij}$ is updated with the $\avec_i \avec_j$ only if the eligibility traces is non-zero for $i$ and $j$.
To adaptively update the initialization, the maximum observed $\avec_i^2$ is stored, as $\cmax$, and the initialization $c_0$ to each $\Cmat_{ii}$ is retroactively updated using
\begin{equation*}
\Cmat_{ii} = \Cmat_{ii} - (1-\beta)^{c_i} c_0 + (1-\beta)^{c_i} \cmax
\end{equation*}
where $c_i$ is the number of times $\Cmat_{ii}$ has been updated.
This update is equivalent to having initialized $\Cmat_{ii} = \cmax$. We provide a more stable retroactive update to $\Cmat_{ii}$, in the pseudocode in Algorithm \ref{alg_UCLS}, that is equivalent to this update.

Fourth, to improve the computational complexity of the algorithm, we propose an alternative, incremental strategy for estimating $\weights$, that takes advantage of the fact that we already need to estimate the inverse of $\Amat$ for the upper bound.
In order to do so, we make use of the summarized information in $\Amat$ to improve the update, but avoid directly computing $\Amat^\inv$ as it may be poorly conditioned.
Instead, we maintain an approximation $\Bmat \approx \Amat^\invt$ that uses a simple gradient descent update, to minimize $\| \Amat^\top \Bmat \xvec_t - \xvec_t \|_2^2$. If $\Bmat$ is the inverse of $\Amat^\top$, then
this loss is zero; otherwise, minimizing it provides an approximate inverse.
This estimate $\Bmat$ is useful for two purposes in the algorithm. First, it is clearly needed to estimate the upper-confidence bound. Second, it also provides a pre-conditioner for the iterative update  $\weights = \weights + \Gmat (\bvec - \Amat \weights)$, for preconditioner $\Gmat$. The optimal preconditioner is in fact the inverse of $\Amat$, if it exists. We use $\Gmat = \Bmat^\top + \eta \eye$ for a small $\eta > 0$ to ensure that the preconditioner is full rank.
Developing this stable update for LSTD required significant empirical investigation into alternatives; in addition to providing a more practical UCLS algorithm, we hope it can improve the use of LSTD in other applications.

\vspace{-0.3cm}
\section{Experiments}


We conducted several experiments to investigate the benefits of UCLS' directed exploration against other methods that use confidence intervals for action selection, to evaluate sensitivity of UCLS's performance with respect to its key parameter $p$, and to contrast the advantage contextual variance estimates offer over global variance estimates in control. Our experiments were intentionally conducted in small---though carefully selected---simulation domains so that we could conduct extensive parameter sweeps, hundreds of runs for averaging, and compare numerous state-of-the-art exploration algorithms (many of which are computationally expensive on larger domains). We believe that such experiments constitute a significant contribution, because effectively using confidence bounds for model free-exploration in RL is still in its infancy---not yet at the large-scale demonstration state--with much work to be done. This point is highlighted nicely below as we demonstrate that several recently proposed exploration methods fail on these simple domains.
%

\vspace{-0.2cm}
\subsection{Algorithms}

We compare UCLS to DGPQ \citep{grande2014sample}, UCBootstrap \citep{white2010interval}, our extension of LSPI-Rmax to an incremental setting \citep{li2009online} and RLSVI \citep{osband2016generalization}. In-depth descriptions of each algorithm and implementation details can be found in the Appendix. These algorithms are chosen because they either keep confidence intervals explicitly, as in UCBootstrap, or implicitly as in DGPQ and RLSVI. In addition, we included LSPI-Rmax as a natural alternative approach to using LSTD to maintain optimistic value estimates.

We also include Sarsa with $\epsilon$-greedy, with $\epsilon$ optimized over an extensive parameter sweep. Though $\epsilon$-greedy is not a generally practical algorithm, particularly in larger worlds, we include it as a baseline.
We do not include Sarsa with optimistic initialization, because even though it has been a common heuristic, it is not a general strategy for exploration. Optimistic initialization can converge to suboptimal solutions if initial optimism fades too quickly \citep{white2010interval}.
Further, initialization only happens once, at the beginning of learning. If the world changes, then an agent relying on systematic exploration due to its initialization may not react, because it no longer explores. For completeness comparing to previous work using optimistic initialization, we include such results in Appendix \ref{extResults}.

\begin{figure*}[t]
\centering
  \includegraphics[width=0.9\textwidth]{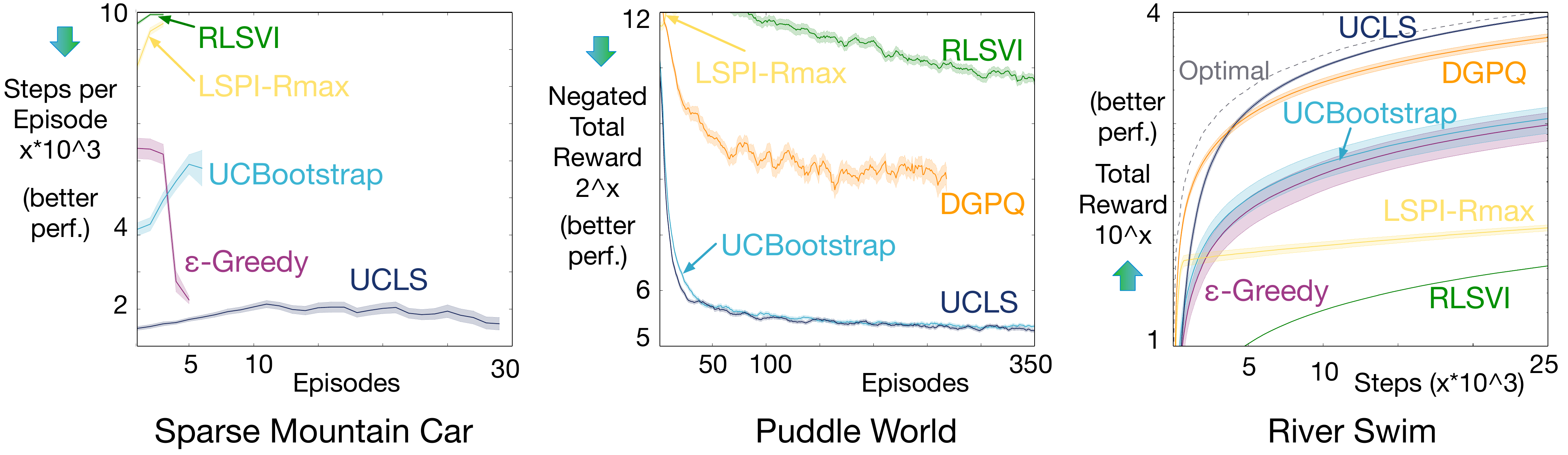}
\caption{A comparison of speed of learning in Sparse Mountain Car, Puddle World and River Swim. In plots (a) and (b) lower on y-axis are better, whereas in (c) curves higher along y-axis are better. Sparse Mountain Car and Puddle World are episodic problems with a fixed experience budget. Thus the length of the lines in plots (a) and (b) indicate how many episodes each algorithm completed over 50,000 steps, and the height on the y-axis indicates the quality of the learned policy---lower indicates better performance. Note RLSVI did not show significant learning after 50,000 steps. The RLSVI result in Puddle World uses a budget of 1 million.}
\vspace{-0.5cm}
\label{fig:LC}
\end{figure*}


\vspace{-0.2cm}
\subsection{Environments}

\textbf{Sparse Mountain Car} is a version of classic mountain car problem \citet{sutton1998reinforcement}, only differing in the reward structure. The agent only receives a reward of $+1$ at the goal and $0$ otherwise, and a discounted, episodic $\gamma$ of $0.998$. The start state is sampled from the range $[-0.6,-0.4]$ with velocity zero. This domain is used to highlight  how exploration techniques perform when the reward signal is sparse, and thus initializing the value function to zero is not optimistic.

\textbf{Puddle World} is a continuous state 2-dimensional world with $(x,y) \in [0,1]^2$ with 2 puddles: (1) $[0.45,0.4]$ to $[0.45,0.8]$, and (2) $[0.1, 0.75]$ to $[0.45,0.75]$ - with radius 0.1 and the goal is the region $(x,y) \in ([0.95,1.0],[0.95,1.0])$. The agent receives a reward of $-1 -400*d$ on each time step, where $d$ denotes the distance between the agent's position and the center of the puddle, and an undiscounted, episodic $\gamma$ of $1.0$. The agent can select an action
to move $0.05 + \zeta$, $\zeta \sim N(\mu = 0, \sigma = 0.1)$. The agent's initial state is uniformly sampled from $(x,y) \in ([0.1,0.3],[0.45,0.65])$. This domain highlights a common difficulty for traditional exploration methods: high magnitude negative rewards, which often cause the agent to erroneously decrease its value estimates too quickly.

\textbf{River Swim} is a standard continuing exploration benchmark \cite{szita2008themany} inspired by a fish trying to swim upriver, with high reward (+1) upstream which is difficult to reach and, a lower but still positive reward (+0.005), which is easily reachable downstream. We extended this domain to continuous states in $[0,1]$, with a stochastic displacement of $0.1$ when taking an action up or down, with low-probability of success for up. The starting position is sampled uniformly in $[0,0.1]$, and $\gamma=0.99$.


\vspace{-0.2cm}
\subsection{Experimental Setup}

We investigate a learning regime where the agents are allowed a fixed budget of interaction steps with the environment, rather than allowing a finite number of episodes of unlimited length. Our primary concern is early learning performance,
thus each experiment is restricted to 50,000 steps, with an episode cutoff (in Sparse Mountain Car and Puddle World) at 10,000 steps. In this regime, an agent that spends a significant time exploring the world during the first episode may not be able to complete many episodes, the cutoff makes exploration easier given the strict budget on experience. Whereas, in the more common framework of allowing a fixed number of episodes, an agent can consume many steps during the first few episodes exploring, which is difficult to detect in the final performance results. We average over 100 runs in River Swim and 200 runs for the other domains
. For all the algorithms that utilize eligibility traces we set $\lambda$ to be 0.9. For algorithms which use exponential averaging, $\beta$ is set to 0.001, and the regularizer $\eta$ is set to be 0.0001. The parameters for UCLS are fixed. RLSVI's weights are recalculated using all experienced transitions at the beginning of an episode in Puddle World and Sparse Mountain Car, and every 5,000 steps in River Swim. The parameters of competitors, where necessary, are selected as the best from a large parameter sweep.



All the algorithms except DGPQ use the same representation: (1) Sparse Mountain Car - 8 tilings of 8x8, hashed to a memory space of 512, (2) River Swim - 4 tilings of granularity 32, hashed to a memory space of 128, and (3) Puddle World - 5 tilings of granularity 5x5, hashed to a memory space of 128. DGPQ uses its own kernel-based representation with normalized state information.

\vspace{-0.2cm}
\subsection{Results \& Analysis}
Our first experiment simply compares UCLS against other control algorithms in all the domains.
Figure \ref{fig:LC} shows the early learning results across all three domains. In all three domains UCLS achieves the best final performance.
In Sparse Mountain Car, UCLS learns faster than the other methods, while in River Swim DGPQ learns faster initially. UCBootstrap and UCLS learn at a similar rate in Puddle World, which is a cost-to-goal domain. UCBootstrap, and bootstrapping approaches generally, can suffer from insufficient optimism, as they rely on sufficiently optimistic or diverse initialization strategies \citep{white2010interval,osband2016deep}. LSPI-Rmax and RLSVI do not perform well in any of the domains. DGPQ does not perform as well as UCLS in Puddle World, and exhibits high variance compared with the other methods. In Puddle World, UCLS goes on to finish 1200 episodes in the alloted budget of steps, whereas in River Swim both UCLS and DGPQ get close to the optimal policy by the end of the experiment.


The DGPQ algorithm uses the maximum reward (Rmax) to initialize the Gaussian processes. In Sparse Mountain Car this effectively converts the problem back into the traditional -1 per-step formulation. In this traditional variant of Mountain Car UCLS significantly outperforms DGPQ (Appendix \ref{extResults}).
Sarsa with $\epsilon$-greedy learns well in Puddle world as it is a cost-to-goal problem in which by default Sarsa uses optimistic initialization, and therefore is reported in the Appendix.
.

\begin{SCfigure}
  \caption{The effect of the confidence parameter $p$ on the policy, in River Swim, using context-dependent variance (UCLS) and global variance (GV-UCB). The values for $p$ are $\{10^{-5}, [1,2,\dots,9]\times10^{-3},10^{-2},10^{-1}\}$. }
  \includegraphics[width=0.65\textwidth,scale=0.25]{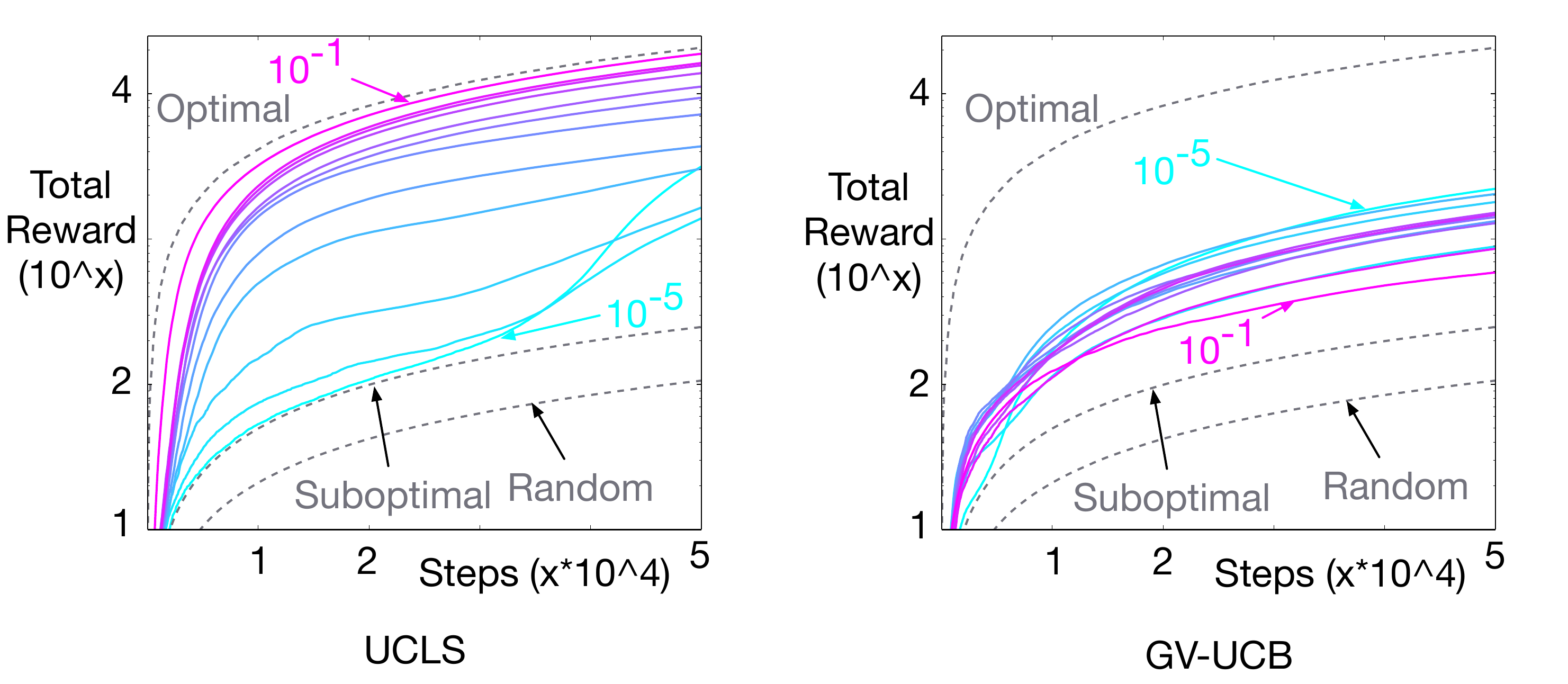}
  \label{fig:RS-pSen}
  \vspace{-0.5cm}
\end{SCfigure}


Next we investigated the impact of the confidence level $1-p$, on the performance of UCLS in River Swim.
The confidence interval radius is proportional to $\sqrt{1+1/p}$; smaller $p$ should correspond to a higher rate of exploration.
In Figure \ref{fig:RS-pSen}, smaller $p$ resulted in a slower convergence rate, but all values eventually reach the optimal policy.
%

Finally, we investigate the benefit using contextual variance estimates over global variance estimates within UCLS. In Figure  \ref{fig:RS-pSen}, we also show the effect of various $p$ values on the performance of the algorithm resulting from Corollary \ref{cor_global}, which we call Global Variance-UCB (GV-UCB) (see Appendix \ref{sec_gvucb} for more details about this algorithm).
For this range of $p$, UCLS still converges to the optimal policy, albeit at different rates. Using a global variance estimates (GV-UCB), on the other hand, results in significant over-estimates of variance, resulting in poor performance.


\vspace{-0.3cm}
\section{Conclusion and Discussion}
 This paper develops a sound upper-confidence bound on the value estimates for least-squares temporal difference learning (LSTD),
without making i.i.d. assumptions about noise distributions. In particular, we allow for context-dependent noise, where variability
could be due to noise in rewards, transition dynamics or even limitations of the function approximator. We then introduce an algorithm, called UCLS,
that estimates these upper-confidence bounds incrementally, for policy iteration. We demonstrate empirically that UCLS requires far fewer exploration steps to find high-quality policies compared to several baselines, across domains chosen to highlight different exploration difficulties.

The goal of this paper is to provide an incremental, model-free, data-efficient, directed exploration strategy.
The upper-confidence bounds for action-values for fixed policies are one of the few available under function approximation, and so a step towards exploration with optimistic values in the general case. A next step is to theoretically show that using these upper bounds for exploration ensures stochastic optimism, and so converges to optimal policies.


One promising aspect of UCLS is that it uses least-squares to efficiently summarize past experience,
but is not tied to a specific state representation.
Though we considered a fixed representation for UCLS, it is feasible that an analysis for the non-stationary case could be used as well for the setting where
the representation is being adapted over time. If the representation drifts slowly, then UCLS may be able to similarly track the upper-confidence bounds.
Recent work has shown that combining deep Q-learning with Least-squares can result in significant performance gains over vanilla DQN\citep{levine2017shallow}. We expect that combining deep networks and UCLS could result in even larger gains, and is a natural direction for future work. 

\vspace{-0.3cm}
\section{Acknowledgements}
We would like to thank Bernardo \'Avila Pires and Jian Qian for their helpful comments, alongwith Calcul Qu\'ebec (\url{www.calculquebec.ca}) and Compute Canada (\url{www.computecanada.ca}) for the computing resources used in this work.

\small
\bibliographystyle{abbrvnat}
\bibliography{paper.bib}

\normalsize
\newpage

\appendix



\begin{figure*}[t]
\includegraphics[width=\textwidth]{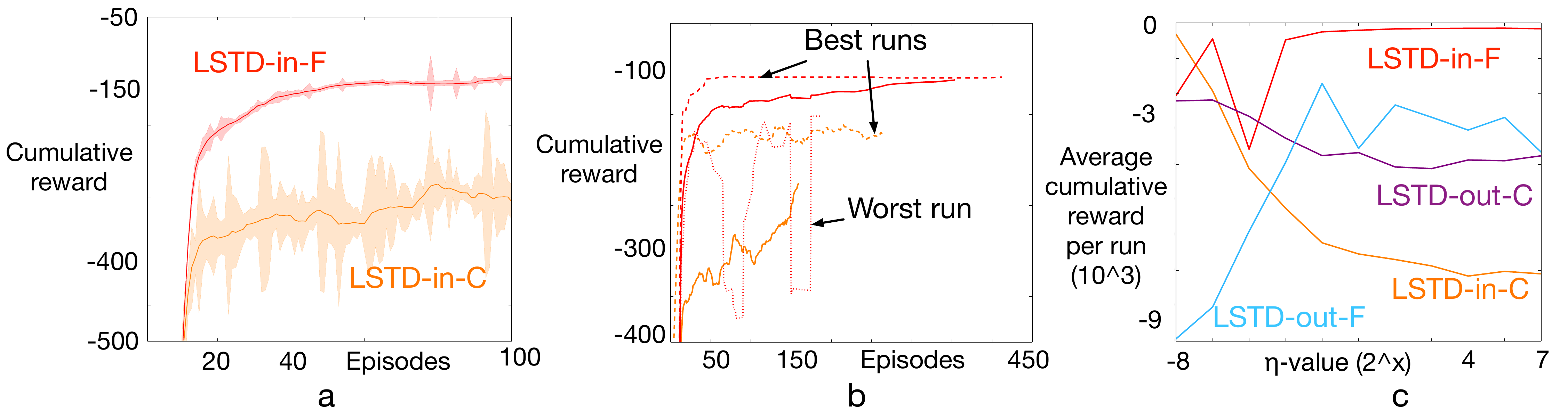}
\caption{Learning performance in Mountain Car for LSTD-in and LSTD-out with $\eta$ kept constant through learning (-C) and $\eta$ fading with time (-F). (a) Early learning curves for LSTD-in. This plot does not include LSTD-out as it performed too poorly to be visible. (b) Learning curves for LSTD-in with best and worst runs. LSTD-in-C's worst run performed too poorly to be visible. (c) Parameter sensitivity for both variants LSTD-in and LSTD-out to $\eta$/$\eta_r$. }
\label{fig:MC-CG}
\end{figure*}

\section{Issues with LSTD for control}\label{app_issues}

LSTD is a more data-efficient algorithm than its incremental counterpart TD, and typically performs quite well in policy evaluation. This is primarily due to TD only using each sample once for a stochastic update with a tuned stepsize parameter. In the case of control, LSTD performs surprisingly well without $\epsilon$-greedy exploration and lack of an optimism strategy. We highlight here the inadvertent use of the regularization parameter as a form of optimism for LSTD  in control, and empirically show when this strategy fails leading us to UCLS as a sound approach in using LSTD in control.

In practice, the inverted matrix $\Amat^\inv$ is often directly maintained using a Sherman-Morrison update, with a small regularizer $\eta$ added to the matrix $\Amat$ to guarantee invertibility \citep{szepesvari2010algorithms}.



\newcommand{\etat}{\eta_r}
\newcommand{\etasm}{\eta}

There are two objectives that can be solved when dealing with an ill-conditioned system $\Amat \wvec = \bvec$. The most common is to use Tikohonov regularization solving, referred to here as \textit{LSTD-out}.
\begin{align*}
\min_\weights \| \Amat \weights - \bvec \|_2^2 + \etat \| \weights \|_2^2
\end{align*}
%
Another approach is to solve the system
\begin{align*}
\min_\weights \| (\Amat + \etasm \eye) \weights - \bvec \|_2^2
\end{align*}
The second approach is implicitly what is solved when a Sherman-Morrison update is used for $\Amat^\inv$, with a small regularizer $\eta$ added to the matrix $\Amat$ to guarantee invertibility.  This approach is referred to here as \textit{LSTD-in}.
When $\eta = 0$, both approaches are solving $\| \Amat \weights - \bvec \|_2^2$, which may have infinitely many solutions if $\Amat$ is not full rank.
While the Tikohonov regularization strategy is more common, the second approach is useful for enabling use of the incremental Sherman-Morrison update to facilitate maintaining $\Amat^\inv$ directly.

Another choice in regularizing the ill-conditioned system is in how $\eta$ decays over time. A small fixed $\eta$ can be used as a constant regularizer, even as the number of samples increases, because the true $\Amat$ may be ill-conditioned. However, more regularization could also be used at the beginning and then decayed over time. The incremental Sherman-Morrison update implicitly decays $\eta$ proportionally to $\frac{1}{t}$.

\newcommand{\tikzsize}{190pt}
\newcommand{\tikzscale}{0.9}

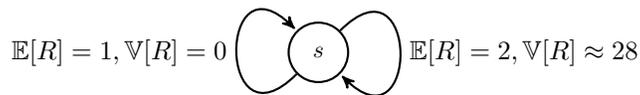
\begin{figure}[h]
\centering
\begin{tikzpicture}[->, >=stealth', auto, semithick, node distance=2cm]
\tikzstyle{every state}=[fill=white,draw=black,thick,text=black,scale=\tikzscale]
\node[state]    (A)                     {$s$};
\Loop[dist=1.5cm,dir=EA,label=$\text{$\E[R] = 2, \Var[R] \approx 28$}$,labelstyle=right](A)
\Loop[dist=1.5cm,dir=WE,label=$\text{$\E[R] = 1, \Var[R] = 0$}$,labelstyle=left](A)
\end{tikzpicture}
\caption{One-state world, where the optimal action (right) has high-variance; the reward here is uniformly sampled from within the set $\{-5,-2,2,5,10\}$. LSTD, with $\epsilon = 0$ and $\eta$ large, fails in this world, unlike the cost-to-goal problems. }\label{fig_onestate}
\end{figure}

We conducted an empirical study using LSTD without an $\epsilon$-greedy exploration strategy in two domains: Mountain Car and a new One-State world.
One-State world---depicted in Figure \ref{fig_onestate}---simulates a typical setting where sufficient exploration is needed: one outcome with low variance and lower expected value and one outcome with high variance and higher expected value. For an algorithm that does not explore sufficiently, it is likely to settle on the suboptimal action, but more immediately rewarding low-variance outcome. This world simulates a larger continuous navigation task from \citealp{white2010interval}. We include results for both systems described above and consider a fading version (shown by \textit{-F}) or a constant regularization parameter (shown by \textit{-C}).


Figure \ref{fig:MC-CG} shows results for the four different LSTD strategies in Mountain Car. The Tikohonov regularization, with $\etat$, is unable to learn an optimal policy in this domain, whereas with either constant or fading $\etasm$, the agent can learn an optimal policy. This is surprising, considering we use neither randomized exploration nor optimistic initialization. The parameter sensitivity curve, shown in plot c, indicates $\etasm$ and $\etat$ needs to be sufficiently large as time passes in order to find an optimal policy.

Next, we show that neither regularization strategy with fading $\eta$ is effective in the One-State world. The optimal strategy is to take the Right action, to get an expected reward of $2$ under a higher variance for obtaining rewards. All of the LSTD variants fail for this domain, because $\eta$ no longer plays a role in encouraging exploration. To verify that a directed exploration strategy helps, we experiment with $\epsilon$-greedy exploration, with $\epsilon=0.1$, decayed by a factor of $0.2$ every 100 steps (shown in Figure \ref{fig:1State-CG-etaS}). With $\epsilon$-greedy,  and small values of $\etat$ and $\etasm$, the policy converges to the optimal action, whereas it fails to with higher values of $\etat$ and $\eta$.

These results suggest that $\eta$'s role in exploration has obscured our understanding of how to use LSTD for control.
LSTD, with sufficient optimism does seem to reach optimal solutions, and unlike \citet{sutton2008dyna},
we did not find any issues with forgetting. This further explains why there have been previous results with small $\epsilon$ for LSTD in cost-to-goal problems, that nonetheless still obtained the optimal policy \citep{vanseijen2015adeeper}. Therefore, in developing UCLS, we more explicitly add optimism to LSTD, and ensure $\eta$ is strictly used as a regularization parameter (to ensure well-conditioned updates).


\begin{SCfigure}
  \caption{$\eta$-sensitivity in 1-State world with various LSTD updates. Sarsa with optimistic initialization $\alpha = 0.001$ is used as a baseline.
	The y-axis represents percentage optimal behaviour, where optimal behaviour is choosing to go right, in 20k steps (averaged over 30 runs).
	Sarsa with optimistic initialization is highly sensitive to the step-size chosen. With other stepsizes (not shown in figure), it reduces its values too quickly, and fails a significant percentage of the time. The best stepsize is chosen here to show near-optimal performance is possible in the domain.}
  \includegraphics[width=0.6\textwidth]{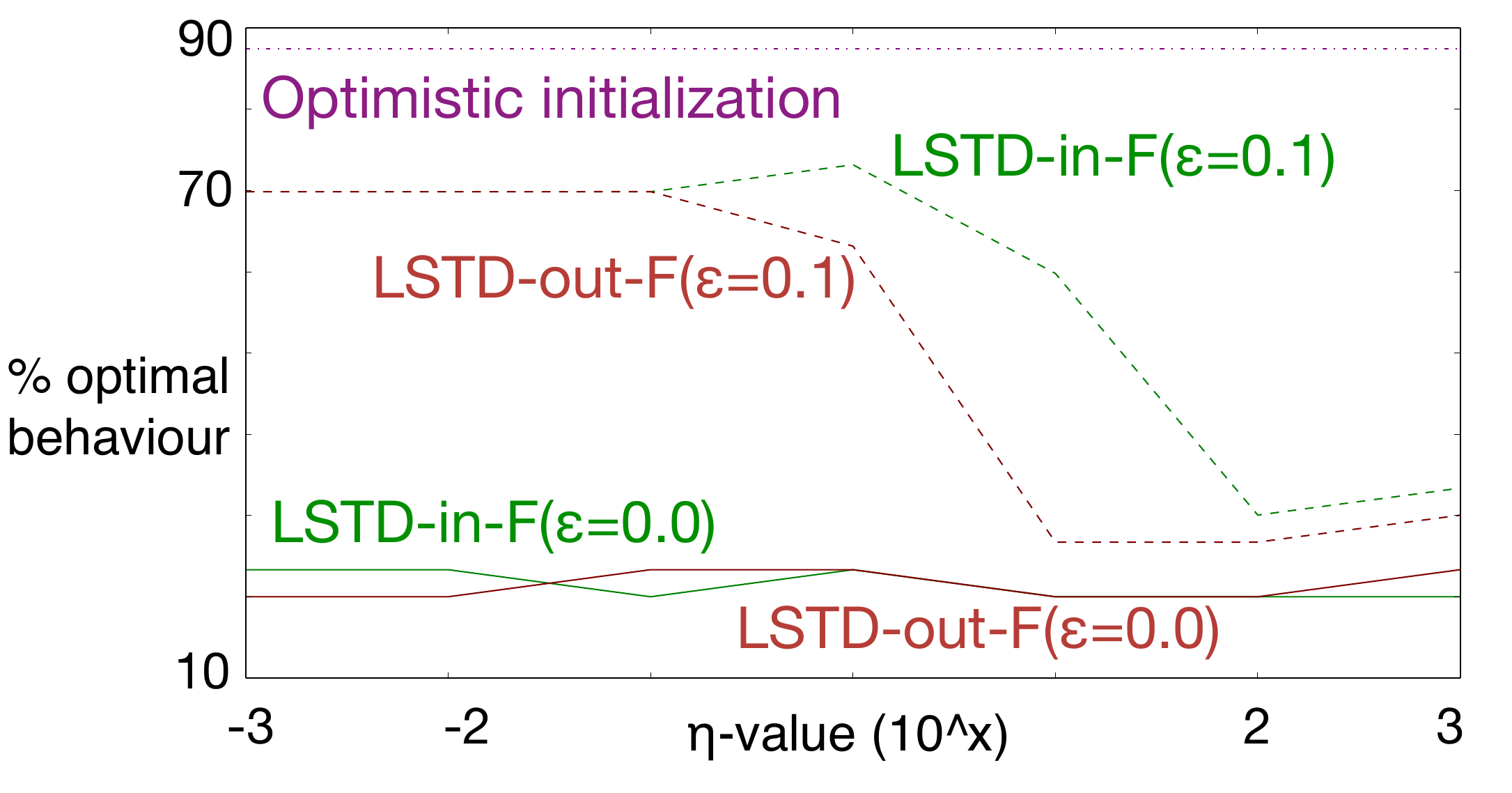}
  \label{fig:1State-CG-etaS}
\end{SCfigure}

\section{Optimistic Values Theorem}\label{app_optimisticvalues}

The use of upper confidence bounds on value estimates for exploration has been well-studied and motivated theoretically in online learning \citep{chu2011contextual}. For reinforcement learning, though, there are only specialized proofs for particular algorithms using optimistic estimates \citep{grande2014sample,osband2016generalization}. 
To better appreciate the use of upper confidence bounds for reinforcement learning, we highlight a simple theorem that motivates its utility.

Under function approximation, it may not be possible to obtain the optimal policy exactly. Instead, our criterion is to obtain the optimal policy according to the following formulation, assuming greedy-action selection from action-values.
Let $Q^*: \States \times \Actions \rightarrow \RR$ be the action-values for the optimal policy, under the chosen density $d: \States \times \Actions \rightarrow [0, \infty)$ over states and actions
\begin{equation}
Q^* = \argmax_{Q \in \mathcal{Q}} \int_{\States \times \Actions} d(s,a) Q(s,a) ds da
\end{equation}
This optimization does not preclude $d$ being related to the trajectory of optimal policy, but generically allows specification of any density, such as one putting all weight on a set of start states or such as one that is uniform across states and actions to ensure optimality from any point in the space. The optimal policy in this setting is the policy that corresponds to acting greedily w.r.t. $Q^*$; depending on the function space $\mathcal{Q}$, this may only be an approximately optimal policy.
%
The design of the agent is directed towards this goal, though we do not explicitly optimize this objective.

Let $\tilde{Q}_t = \hat{Q}_t + \hat{U}_t$ be the estimated action-values plus the confidence interval radius $\hat{U}_t$ on time step $t$, to get the estimated upper confidence bound which the agent uses to select actions. Let $\pi_t$ be the policy induced by greedy action selection on $\tilde{Q}_t$.
\begin{assumption}[Expected Optimism]
At some point $T> 0$,
the action-values at every step $t \ge T$ are optimistic in expectation: $\E[\tilde{Q}_t(S, A)] \ge \E[Q^*(S,A)]$, with expectation according to a specified density $d: \States \times \Actions \rightarrow [0, \infty)$.
\end{assumption}
\begin{assumption}[Shrinking Confidence Interval Radius]
The confidence interval radius $\hat{U}_t$ goes to zero: $\E[\hat{U}_t(S, A)] \le f(t)$ for some non-negative function $f$ with $f(t) \rightarrow 0$.
\end{assumption}
\begin{assumption}[Convergent Action Values]
The estimated action-values $\hat{Q}_t$ approach the true action-values for policy $\pi_t$:
$\left|\E[\hat{Q}_t(S,A) - Q^{\pi_t}(S,A)] \right| \le g(t)$ for some non-negative function $g$ with $g(t) \rightarrow 0$.
\end{assumption}
%

These assumptions are heavily dependent on the distribution utilized to evaluate the expectation. If the expectations are w.r.t. the stationary distribution induced by the optimal policy ($d^*$), it is easy to see that they could be satisfied - as the density is non-zero only for the optimal state-action pairs.
In contrast, if the density is a uniform density over the space, then these assumptions may not be satisfied.

Given the three key assumptions, the theorem below is straightforward to prove. However, these three conditions are fundamental, and do not imply each other. Therefore, this result highlights what would need to be shown, to obtain the Optimistic Values Theorem.
For example, Assumption 1 and 2 do not imply Assumption 3, because the confidence interval radius could decrease to zero,
and $\hat{Q}_t$ can still be optimistic in expectation and an over-estimate of values that correspond to a suboptimal policy.
Assumption 1 and 3 do not imply Assumption 2, because $\hat{Q}_t$ could converge to the policy corresponding to acting greedily
w.r.t. $\tilde{Q}_t$, but $\hat{U}_t$ may never fade away. Then, $\tilde{Q}_t$ could still be optimistic in expectation, but the policy $\pi_t$
could be suboptimal because it is acting greedily according to inaccurate, inflated estimates of value $\tilde{Q}_t$.

\begin{theorem}[Optimistic Values Theorem]\label{thm_opt}
Under Assumptions 1, 2 and 3,
\begin{align*}
\E[Q^*(S,A)] - \E[Q^{\pi_t}(S, A)] &\le f(t) + g(t)\\
\text{Regret}(\nsamples) &\defeq \sum_{t=1}^\nsamples \E[Q^*(S,A)] - \E[Q^{\pi_t}(S, A)] \\
&\le \sum_{t=1}^\nsamples f(t) + g(t)
\end{align*}
\end{theorem}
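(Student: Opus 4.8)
The plan is to chain together the three assumptions as a single sequence of inequalities applied to the per-step suboptimality $\E[Q^*(S,A)] - \E[Q^{\pi_t}(S,A)]$, and then to sum the resulting per-step bound over $t$ to obtain the regret bound. The whole argument rests on the decomposition $\tilde{Q}_t = \hat{Q}_t + \hat{U}_t$ together with linearity of expectation, so no concentration or probabilistic machinery is needed beyond what the three assumptions already encode.

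First I would fix $t \ge T$ and write $\E[Q^*(S,A)] - \E[Q^{\pi_t}(S,A)] \le \E[\tilde{Q}_t(S,A)] - \E[Q^{\pi_t}(S,A)]$, where the inequality is exactly the stochastic-optimism Assumption 1. Next I would substitute $\E[\tilde{Q}_t(S,A)] = \E[\hat{Q}_t(S,A)] + \E[\hat{U}_t(S,A)]$ and regroup the right-hand side as $(\E[\hat{Q}_t(S,A)] - \E[Q^{\pi_t}(S,A)]) + \E[\hat{U}_t(S,A)]$. Applying Assumption 3 to the first parenthesized term, via $\E[\hat{Q}_t - Q^{\pi_t}] \le |\E[\hat{Q}_t - Q^{\pi_t}]| \le g(t)$, and Assumption 2 to the second term, via $\E[\hat{U}_t(S,A)] \le f(t)$, yields the per-step bound $\E[Q^*(S,A)] - \E[Q^{\pi_t}(S,A)] \le f(t) + g(t)$. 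The regret bound then follows by summing this per-step inequality over $t = 1, \dots, \nsamples$, giving $\mathrm{Regret}(\nsamples) \le \sum_{t=1}^\nsamples f(t) + g(t)$.

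The only subtlety---really the single place where care is required rather than a genuine obstacle---is that stochastic optimism (Assumption 1) is assumed only for $t \ge T$, whereas the regret sum begins at $t = 1$. I would handle this either by treating the early steps $t < T$ separately, where their contribution is a finite, $\nsamples$-independent constant that can be absorbed into $f$ or bounded by the range of the value function, or, following the statement as written, by tacitly taking $T = 1$; since $f$ and $g$ are non-negative and the per-step bound holds for all $t \ge T$, the displayed regret bound then follows directly.

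Because each of the three assumptions contributes exactly one of the three needed inequalities, and because (as the excerpt stresses) none of them implies another, the derivation is genuinely just this short chain of substitutions and inequalities. The substantive content of the result therefore lies entirely in establishing the three assumptions for a given algorithm---particularly stochastic optimism and the shrinking confidence radius---rather than in the argument above, which is why I expect the proof itself to be routine once the assumptions are in place.
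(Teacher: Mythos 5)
Your proof is correct and follows essentially the same route as the paper's: drop $\E[Q^*(S,A)-\tilde{Q}_t(S,A)]\le 0$ by Assumption 1, decompose $\tilde{Q}_t - Q^{\pi_t}$ into $(\hat{Q}_t - Q^{\pi_t}) + \hat{U}_t$, and bound the two pieces by $g(t)$ and $f(t)$ via Assumptions 3 and 2, then sum. Your remark about the sum starting at $t=1$ while Assumption 1 only holds for $t\ge T$ is a valid observation that the paper's proof silently glosses over, but it does not change the substance of the argument.
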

\begin{proof}
Consider the regret across states and actions
\begin{align*}
\E[Q^*(S,A) - Q^{\pi_t}(S,A)] &= \E[Q^*(S,A) - \tilde{Q}_t(S,A)] + \E[\tilde{Q}_t(S,A) - Q^{\pi_t}(S,A)]\\
&\le \E[\tilde{Q}_t(S,A) - Q^{\pi_t}(S,A)]
\end{align*}
because $\E[Q^*(S,A) - \tilde{Q}_t(S,A)] \le 0$ by Assumption 1.
By Assumptions 2 and 3,
\begin{align*}
\E[\tilde{Q}_t(S,A) - Q^{\pi_t}(S,A)]
&= \E[\hat{Q}_t(S,A) - Q^{\pi_t}(S,A)] +  \E[\hat{U}_t(S,A)] \\
&\le g(t) +  f(t)
\end{align*}
completing the proof.
\end{proof}

This result is intentionally abstract, where the three assumptions could be satisfied in a variety of ways.
The first assumption would need propagation of optimism as done by many methods which use the principle of optimism in the face of uncertainty in tabular RL.
We hypothesize that the last two assumptions could be addressed with a two-timescale analysis,
with confidence interval radius $\hat{U}_t$ updating more slowly than $\hat{Q}_t$. This would reflect an iterative approach, where the optimistic values are essentially held fixed---such as is done in Delayed Q-learning \citep{grande2014sample}---and $Q^{\pi_t}$ estimated, before then adjusting the optimistic values. The updates to $\hat{Q}_t$, then, would be updated on a faster timescale, converging to $Q^{\pi_t}$, and the upper confidence radius $\hat{U}_t$ updating on a slower timescale.

\begin{algorithm}[h]
\caption{GetOptimisticAction($\xvec_{s,\cdot}$)}\label{alg_UCB}
\begin{algorithmic}
\State $u_{a}  \gets  \sqrt{\left(1 + \tfrac{1}{p}\right)\xvec_{s,a}^\top \Cmat \xvec_{s,a}}$ \hspace{0.2cm} $\forall a \in \Actions$
\State $a = \argmax_{a \in \Actions} \xvec_{s,a}^\top\weights + u_{a}$
\\\Return $a$
\end{algorithmic}
\end{algorithm}

\begin{algorithm}[h]
\caption{UCLS($\lambda$)}\label{alg_UCLS}
\begin{algorithmic}
\State $ \Amat \gets \zerovec$, $\bvec \gets \zerovec$, $\zvec \gets \zerovec$, $\weights \gets \zerovec$
\State $\Bmat \gets \eye$, $\Cmat \gets \eye$,  $\nubarvec \gets \zerovec$, $\counts \gets \onevec$
\State  $\deltaprob = 0.1$, $\eta=10^{-4}$, $\beta = 0.001$, $\cmax = 1.0$
\State $\xvec_{s,\cdot} \gets$ initial state-action features, for any action
\State $a \gets$ GetOptimisticAction$(\xvec_{s,\cdot})$
\Repeat
\State Take action $a$ and observe $\xvec_{s', \cdot}$ and $r$, and $\gamma$
\State $a' \gets$ GetOptimisticAction$(\xvec_{s',\cdot})$
\State $\delta \gets r + (\gamma \xvec_{s',a'} - \xvec_{s,a})^\top \wvec$
\State $\zvec \gets \gamma\lambda\zvec + \xvec_{s,a}$
\State $\bvec \gets (1-\beta) \bvec + \beta r\zvec$ 
\State $\Amat\gets (1-\beta)\Amat + \beta \zvec (\xvec_{s,a} - \gamma \xvec_{s',a'})^\top$
\State $\triangleright$ Update $\Bmat \approx \Amat^\invt$
\State $\alpha = \min\left\{1.0,\frac{0.01}{||\Amat||_F^2||\xvec_{s,a}||_2^2+1.0}\right\}$
\State $\Bmat \gets \Bmat - \alpha\Amat(\Amat^\top\Bmat\xvec_{s,a}-\xvec_{s,a})\xvec_{s,a}^\top$
\State $\triangleright$ Update $\Cmat$
\State $\nubarvec \gets (1-\beta) \nubarvec + \beta \delta \zvec$
\State $\avec \gets \Bmat^\top \nubarvec$
\State $\text{temp} = \cmax$
\State $\cmax = \max(\cmax, \avec_1^2, \ldots, \avec_\xdim^2)$
\If{$\text{temp} \neq \cmax$} \hfill $\triangleright$  Adjust initialization
\State $\Cmat_{ii} \gets \Cmat_{ii} + \counts_i (\cmax - \text{temp})$, $\forall i$
\EndIf
\For{$i$ such that $\zvec_i \neq 0$}
	\State $\counts_i = \counts_i (1-\beta)$
	\For{$j$ such that $\zvec_j \neq 0$}
		\State $\Cmat_{ij} \gets (1-\beta) \Cmat_{ij} + \beta \avec_i\avec_j$
	\EndFor
\EndFor
\State $\triangleright$ Update $\wvec$
\State $\wvec \gets\wvec +  (\Bmat^\top + \eta \eye) (\bvec - \Amat \wvec)$ 
\State $\xvec_{s,a} \gets \xvec_{\svec',a'}$ \ \ \text{ and } \ \ $a \gets a'$
\Until{agent done interaction with environment}
\end{algorithmic}
\end{algorithm}

\newcommand{\deltavar}{\delta_\text{var}}
\newcommand{\wvar}{\mathbf{w}_\text{var}}
\newcommand{\wvarinit}{\mathbf{w}_\text{varInit}}
\newcommand{\vinit}{v_\text{init}}
\newcommand{\alphamean}{\alpha}
\newcommand{\alphavar}{\alpha_\text{var}}

\section{Estimating Upper Confidence Bounds for Policy Evaluation using linear TD} \label{sec_UCBound-linear}
Recall that the TD update \citep{sutton1988learning} processes one sample at a time as $\wvec_{t+1} = \wvec_t + \stepsize \delta_t \zvec_t$ to estimate the solution to the least-squares system $\wvec_\nsamples = \Amat_\nsamples^{-1} \bvec_\nsamples $ in an incremental manner. This is feasible as the following holds:
\begin{align*}
  \wvec_\nsamples &= \Amat_\nsamples^{-1} \bvec_\nsamples\\
  \Amat_\nsamples \wvec_\nsamples &= \bvec_\nsamples\\
  \bigg[\frac{1}{\nsamples} \sum_{t=0}^{\nsamples-1} \zvec_t (\xvec_t - \gamma_{t+1} \xvec_{t+1})^\top \bigg] \wvec_\nsamples &= \bigg[\frac{1}{\nsamples} \sum_{t=0}^{\nsamples-1} \zvec_t r_t \bigg] \\
  \sum_{t=0}^{\nsamples-1} \zvec_t (r_t + \gamma_{t+1} \xvec_{t+1}^\top \wvec_\nsamples - \xvec_{t}^\top \wvec_\nsamples) &= 0 \\
  \sum_{t=0}^{\nsamples-1} \zvec_t \delta_t &= 0
\end{align*}
Therefore, $\wvec_t$ is updated incrementally with a constant step-size towards minimizing this error stochastically.

Given this incremental method to estimate a least-squares solution, we can notice that the covariance matrix is the outer-product of the solution to a similar least-squares system, $\Amat_\nsamples^{-1} \nubarvec_\nsamples$. The solution to this least-squares system is denoted by $\wvar$, and can be estimated incrementally as:
\begin{align*}
  {\wvar}_{t+1} = {\wvar}_t + \alpha {\deltavar}_t \zvec_t
\end{align*}
where, ${\deltavar}_t = \delta_t + \gamma_{t+1} \xvec_{t+1}^\top {\wvar}_t - \xvec_{t}^\top {\wvar}_t$.

Therefore, for a given policy, the true action-values satisfy the following:
\begin{align*}
\xvec^\top \weights^* &\le \xvec^\top \weights_\nsamples + \sqrt{\tfrac{\deltaprob+1}{\deltaprob}}\sqrt{\xvec^\top {\wvar}_{\nsamples} {\wvar}_{\nsamples}^{\top}\xvec} \label{UCLS-upperbound}
\end{align*}

Similarly a linear variant of GV-UCB can be obtained as the upper bound again consists of an outer-product to a different least-squares system $\Amat_\nsamples^\inv\zbarvec_\nsamples$. But as shown in Figures \ref{fig:RS-pSen} and \ref{fig:LC-APP}, GV-UCB, the quadratic version, can be highly sample inefficient, which may worsen with the linear variant, GV-UCB-L. Therefore, we do not provide an algorithm, or any empirical results for GV-UCB-L here.

\section{UCLS-L: Estimating upper confidence bounds for linear TD in control}\label{sec_UCLS-L}
In the same spirit as UCLS utilizes the policy evaluation upper-bound of LSTD for control, with a slowly changing control policy, UCLS-L utilizes the policy evaluation upper-bound of linear TD for control. At each step, UCLS-L, given in Algorithm \ref{alg_UCLS_linear}, uses a stochastic update to estimate mean action-values, and their corresponding contextual-variance estimates. These stochastic updates use fixed, and if necessary are different, step-sizes ($\alpha$, and $\alphavar$ respectively), instead of a closed-form solution as done by UCLS. The rate of change of the policy in UCLS-L is controlled by the step-size, unlike in UCLS which utilizes weighted forms of experience samples in $\Amat$ and $\bvec$. Therefore, UCLS-L can be sensitive to the step-sizes, but adapt more quickly to a changing feature-space. Further, in order to account for underestimates of variances, UCLS-L uses another vector $\wvarinit$, in a similar spirit as UCLS's retroactive initialization of covariance estimates. Additionally, as these upper-bounds are estimated incrementally, they can be quite loose, specifically so in the linear framework. Therefore, instead of choosing the best parameter $p$, we can choose a parameter $\bar{p} = \sqrt{1 + \frac{1}{p}}$: the loss of theoritical interpretation of the upper-bound is traded-off for better empirical performance.

\begin{algorithm}[h]
\caption{GetOptimisticActionLinear($\xvec_{s,\cdot}$)}\label{alg_UCB_linear}
\begin{algorithmic}
\State $u_{a}  \gets  \sqrt{\left(1 + \tfrac{1}{p}\right) \big((\xvec_{s,a}^\top\wvar)^2 + ||\xvec_{s,a}||^2_{\eye\wvarinit}\big)}$ \hspace{0.2cm} $\forall a \in \Actions$
\State $a = \argmax_{a \in \Actions} \xvec_{s,a}^\top\weights + u_{a}$
\\\Return $a$
\end{algorithmic}
\end{algorithm}

\begin{algorithm}[h]
\caption{UCLS-L($\lambda$)}\label{alg_UCLS_linear}
\begin{algorithmic}
\State $\deltaprob = 0.1$, $\beta = 0.001$, $\vinit = 1.0$, $\alphamean = 0.01$, $\alphavar = 0.1$
\State $ \weights \gets \zerovec$, $\wvar \gets \zerovec$, $\wvarinit \gets \onevec*\vinit$, $\counts \gets \onevec$
\State $\xvec_{s,\cdot} \gets$ initial state-action features, for any action
\State $a \gets$ GetOptimisticActionLinear$(\xvec_{s,\cdot})$
\Repeat
\State Take action $a$ and observe $\xvec_{s', \cdot}$ and $r$, and $\gamma$
\State $a' \gets$ GetOptimisticActionLinear$(\xvec_{s',\cdot})$
\State $\delta \gets r + (\gamma \xvec_{s',a'} - \xvec_{s,a})^\top \wvec$
\State $\deltavar \gets \delta + (\gamma \xvec_{s',a'} - \xvec_{s,a})^\top \wvar$
\State $\zvec \gets \gamma\lambda\zvec + \xvec_{s,a}$
\State $\triangleright$ Update $\wvar$ and $\wvarinit$
\State $\wvar \gets\wvar +  \alphavar \deltavar \zvec$
\State $\text{temp} = \vinit$
\State $\vinit = \max(\vinit, {\wvar}_{1}^2, \ldots, {\wvar}_\xdim^2)$
\If{$\text{temp} \neq \vinit$} \hfill $\triangleright$  Adjust initialization
\State ${\wvarinit}_{i} \gets {\wvarinit}_{i} + \counts_i (\vinit - \text{temp})$, $\forall i$
\EndIf
\For{$i$ such that $\zvec_i \neq 0$}
	\State $\counts_i = \counts_i (1-\beta)$
  \State ${\wvarinit}_{i} \gets (1-\beta)*{\wvarinit}_{i}$, $\forall i$
\EndFor
\State $\triangleright$ Update $\wvec$
\State $\wvec \gets\wvec +  \alphamean \delta \zvec$
\State $\xvec_{s,a} \gets \xvec_{\svec',a'}$ \ \ \text{ and } \ \ $a \gets a'$
\Until{agent done interaction with environment}
\end{algorithmic}
\end{algorithm}

With this, we investigate UCLS-L as a substitue to UCLS in the three benchmark domains. For UCLS-L, both $p$ and $\bar{p}$ is swept, from which the best parameter is selected scale the uncertainity unstemiate, along with the learning rates $\alphamean$ and $\alphavar$. The experiment configuration and the domains are the same as used in UCLS. The results are presented in Figure $\ref{fig:LC-L}$. UCLS-L does reasonably well in all the domains. While it experiences more regret in Puddle World, and River Swim during early learning, by the end of the steps budget, it learns the optimal policy. In Sparse Mountain Car, surprisingly, UCLS-L learns much faster and a better policy than UCLS. This can be attributed to the fact that the parameter $p$ in UCLS was not swept, whereas in UCLS-L we did sweep to find the best parameter to scale the variance estimate. As the domain is a sparse-reward domain, the variance estimates play a significant role in influencing exploratory behaviour, and therefore optimizing for $p$ would improve UCLS' performance. Nonetheless, these results show UCLS-L to be a promising algorithm for linear complexity based control, and warrant further evaluation of it.

\begin{figure*}[ht]
\centering
  \includegraphics[width=0.9\textwidth]{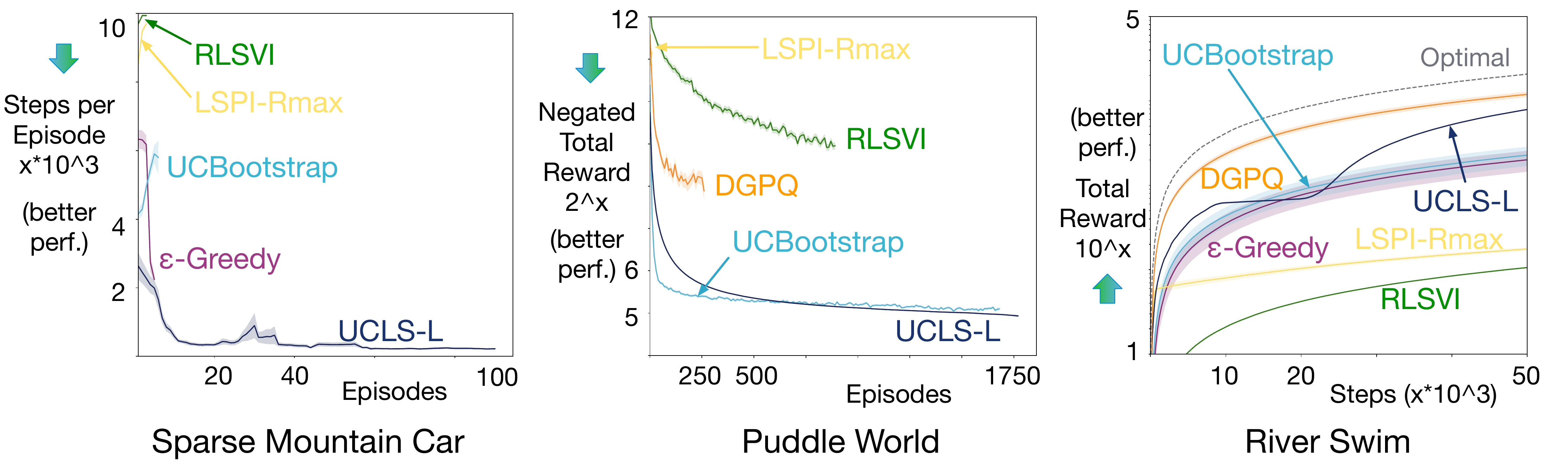}
\caption{A comparison of speed of learning in Sparse Mountain Car, Puddle World and River Swim. In plots (a) and (b) lower on y-axis are better, whereas in (c) curves higher along y-axis are better. Sparse Mountain Car and Puddle World are episodic problems with a fixed experience budget. Thus the length of the lines in plots (a) and (b) indicate how many episodes each algorithm completed over 50,000 steps, and the height on the y-axis indicates the quality of the learned policy---lower indicates better performance. Note RLSVI did not show significant learning after 50,000 steps. The RLSVI result in Puddle World uses a budget of 1 million.}
\label{fig:LC-L}
\end{figure*}

\newcommand{\rbar}{\bar{{r}}}
\newcommand{\rsqbar}{\bar{{r^2}}}
\newcommand{\dbarvec}{\bar{{\mathbf{d}}}}
\newcommand{\drbarvec}{\bar{{\mathbf{d_r}}}}
\newcommand{\dmat}{\bar{{\mathbf{D}}}}
\newcommand{\nubar}{\bar{{\nu}}}
\newcommand{\nusqbar}{\bar{{\nu^2}}}

\begin{algorithm}
\caption{GetOptimisticActionGlobal($\xvec_{s,\cdot}$)}\label{alg_UCBG}
\begin{algorithmic}
\State $u_{a}  \gets  \sigma \sqrt{\left(1 + \tfrac{1}{p}\right)\xvec_{s,a}^\top \Cmat \xvec_{s,a}}$ \hspace{0.2cm} $\forall a \in \Actions$
\State $a = \argmax_{a \in \Actions} \xvec_{s,a}^\top\weights + u_{a}$
\\\Return $a$
\end{algorithmic}
\end{algorithm}

\begin{algorithm}[t]
\caption{GV-UCB($\lambda$)}\label{alg_UCLS-G}
\begin{algorithmic}
\State $ \Amat \gets \zerovec$, $\bvec \gets \zerovec$, $\zvec \gets \zerovec$, $\weights \gets \zerovec$,
\State $\Bmat \gets \eye$, $\Cmat \gets \eye$,  $\zbarvec \gets \zerovec$
\State  $p = 0.01$,$\eta=10^{-4}$, $\beta = 0.001$
\State  $\sigma = 1.0$, $\rbar = 0.0$, $\rsqbar = 100.0$, $\dbarvec \gets \zerovec$, $\drbarvec \gets \zerovec$, $\dmat \gets \zerovec$
\State $\xvec_{s,\cdot} \gets$ initial state-action features, for any action
\State $a \gets$ GetOptimisticActionGlobal$(\xvec_{s,\cdot})$
\Repeat
\State Take action $a$ and observe $\xvec_{s', \cdot}$ and $r$, and $\gamma$
\State $a' \gets$ GetOptimisticActionGlobal$(\xvec_{s',\cdot})$
\State $\delta \gets r + (\gamma \xvec_{s',a'} - \xvec_{s,a})^\top \wvec$
\State $\zvec \gets \gamma\lambda\zvec + \xvec_{s,a}$
\State $\bvec \gets (1-\beta) \bvec + \beta r\zvec$
\State $\Amat\gets (1-\beta)\Amat + \beta \zvec (\xvec_{s,a} - \gamma \xvec_{s',a'})^\top$
\State $\triangleright$ Update $\Cmat$
\State $\zbarvec \gets (1-\beta) \zbarvec + \beta \zvec$
\State $\avec \gets \Bmat^\top \zbarvec$
\For{$i$ such that $\zvec_i \neq 0$}
	\For{$j$ such that $\zvec_j \neq 0$}
		\State $\Cmat_{ij} \gets (1-\beta) \Cmat_{ij} + \beta \avec_i\avec_j$
	\EndFor
\EndFor
\State $\triangleright$ Update $\sigma$
\State $\rbar \gets (1 - \beta) \rbar + \beta r$
\State $\rsqbar \gets (1 - \beta) \rsqbar + \beta r^2$
\State $\dbarvec \gets (1 - \beta) \dbarvec + \beta (\xvec_{s,a} - \gamma \xvec_{s',a'})$
\State $\drbarvec \gets (1 - \beta) \drbarvec + \beta r(\xvec_{s,a} - \gamma \xvec_{s',a'})$
\State $\dmat \gets (1 - \beta) \dmat + \beta (\xvec_{s,a} - \gamma \xvec_{s',a'})(\xvec_{s,a} - \gamma \xvec_{s',a'})^\top$
\State $\nubar = \rbar - \dbarvec^T \wvec$
\State $\nusqbar = \rsqbar - 2 \drbarvec^T \wvec + \wvec^\top \dmat \wvec$
\State $\sigma = \sqrt{\nusqbar - \nubar^2}$
\State $\triangleright$ Update $\wvec$ and $\Bmat \approx \Amat^\invt$
\State $\alpha = \min\left\{1.0,\frac{0.01}{||\Amat||_F^2||\xvec_{s,a}||_2^2+1.0}\right\}$
\State $\Bmat \gets \Bmat - \alpha\Amat(\Amat^\top\Bmat\xvec_{s,a}-\xvec_{s,a})\xvec_{s.a}^\top$
\State $\wvec \gets\wvec +  (\Bmat + \eta \eye) (\bvec - \Amat \wvec)$
\State $\xvec_{s,a} \gets \xvec_{\svec',a'}$ \ \ \text{ and } \ \ $a \gets a'$
\Until{agent done interaction with environment}
\end{algorithmic}
\end{algorithm}

\section{Details about other algorithms}

\subsection{Global variance UCB}\label{sec_gvucb}

Based on Corollary \ref{cor_global} to estimate a  global variance $\sigma^2$,
it is possible that the noise may not be 0-mean during the learning process. We account for this by estimating mean of $\nu_t$ as well. We know $\nu_t \sim \mathcal{N}(\bar{\nu_t}, \sigma_t^2)$. Therefore:
\begin{align*}
\bar{\nu}_{t+1} &= E[r_{t+1}] &&- E[\vec{x}_t - \gamma \vec{x}_{t+1}]^\top \weights_t \\
\bar{\nu^2}_{t+1} &= E[r_{t+1}^2] &&- 2E[r_{t+1}(\vec{x}_t - \gamma \vec{x}_{t+1})]^\top \weights_t\\
& &&+ \weights_t^\top E[(\vec{x}_t - \gamma \vec{x}_{t+1}) (\vec{x}_t - \gamma \vec{x}_{t+1})^\top] \weights_t
\end{align*}
These expected values are maintained incrementally. Utilizing this, $\sigma_{t+1}^2 = \bar{\nu^2}_{t+1} - \bar{\nu}_{t+1}^2$. We refer to Global variance UCB as GV-UCB. The algorithm is given in Algorithm \ref{alg_UCLS-G}.

\subsection{Bootstrapped upper confidence bounds}


The strategy for action selection which utilizes bootstrapped confidence intervals, as proposed by \citet{white2010interval}, is given in Algorithm \ref{alg_bootstrapglobal}. This action selection strategy can be used in conjunction with any learning algorithm to guide on-policy control. The algorithm requires a window of recent $\wvec$'s. The window can be maintained with a circular queue. The window is updated after each learning step of the main algorithm, resulting in a new $\wvec_t$ in the queue.
The original UCBootstrap paper proposed both a global and a sparse updating mechanism, where only the global approach was theoretically justified.
The sparse mechanism was used to reduce the number of parameters stored, particularly by taking advantage of tile-coding representations.
We found in our experiments that the global approach worked just as well as the sparse approach, and so we include only the simpler,
theoretically justified algorithm.

\begin{algorithm}
\caption{UCBootstrap($\xvec_{s,\cdot}$) select action from state features $\xvec_{s,\cdot}$ at time $t$}\label{alg_bootstrapglobal}
$l = $ block length, $B$ = number of bootstrap resamples, $\window$ = number (window) of value functions weights to store and confidence level $\alpha$\\
examples: $l = 10$, $B = 50$, $\window = 100$, $\alpha = 0.05$
\begin{algorithmic}
\State $M \gets  \lfloor w/l \rfloor$ \Comment{num of length $l$ blocks to sample with replacement and concatenate}
\For{each action $a$}
\State $Q_N \gets \{\wvec_{t-\window}^\top \xvec_{s,a},\ldots, \wvec_{t-1}^\top \xvec_{s,a}\}$
\State $\bar{Q}_N \gets $ mean($Q_N$)  \Comment{The mean value for this $(s,a)$, given the window of recent weights}
\State Blocks = {\small$\Big\{\{[Q_N[0],\ldots,Q_N[l\text{-}1]\}, \{[Q_N[1],\ldots,Q_N[l]\}, $ }
\State \hspace{3.5cm}{\small $\ldots, [Q_N[w\text{-}l], \ldots, Q_N[w\text{-}1]]\Big\}$ }
\ForAll{$i = 1$ to $B$}
\ForAll{$j = 1$ to $M$}
\State $A^*_{j} \gets$ random block from Blocks (chosen with replacement)
\EndFor
\State $A \gets (A^*_{1}, A^*_{2},\ldots,A^*_{M})$  \Comment{Concatenate blocks}
\State $ T^*_i = \frac{1}{lM} \sum_{k=1}^{lM} A[k]$ \Comment{$i$th bootstrap estimate is the mean of the $M$ concatenated blocks}
\EndFor
\State $T \gets$ sort($\{T^*_1,\ldots,  T^*_B\}$) \Comment{ascending order}
\State $j \gets \lfloor \frac{B\alpha}{2} + \frac{\alpha+2}{6}\rfloor$  \Comment{$j$ is the position of the critical samples to help estimate the continuous sample quantile}
\State $r \gets \frac{B\alpha}{2} + \frac{\alpha+2}{6} - j$  \Comment{$r$ is the remainder}
\State $T^*_{\alpha/2} \gets (1-r)T^*_j + rT^*_{j+1}$ \Comment{the $\alpha/2$ sample quantile}
\State $u_a \gets 2\bar{Q}_N - T^*_{\alpha/2}$
\EndFor
\State $a = \argmax_{a \in \Actions} u_{a}$
\\\Return $a$
\end{algorithmic}
\end{algorithm}

\begin{algorithm}
\caption{DGPQ($k(\cdot,\cdot), d(\cdot,\cdot), L_{Q}, Env, \Actions, R_{max},$ $s_0, \gamma, \sigma^2, \sigma^2_{tol}, \epsilon$)}\label{alg_DGPQ}
$k(\cdot,\cdot), d(\cdot,\cdot)$ are typically the RBF w/ bandwidth = $\sigma^2$ and euclidean distance respectively.
\\ $L_{Q}$ correlates with exploration.
\\ $\Actions$ is the set of possible actions.
\\ $\gamma$ is the discount factor.
\\ $\sigma^2_{tol}$ is the tolerance of induced variance of using a new point to update a GP
\\ Found useful ranges for parameters during sweeps:
\\ \hspace{1cm} $\sigma^2 \in [0.001,0.5]$,$\sigma^2_{tol} \in [0.01,0.1]$, $\epsilon \in [0.01,0.1]$, $L_{Q}\in[1,20]$
\begin{algorithmic}[1]
\State $\hat{Q}(s,a) \defeq \min$(\par
\hskip \algorithmicindent$V_{max}$,\par
\hskip \algorithmicindent$\min\limits_{(s_i,a) \in \hat{Q}_{a}.BV} \left\{ [\hat{\mu}_i + L_Q d((s,a),(s_i,a))] \right\}$\par
\hskip \algorithmicindent )
\For{ $a \in A$ }
\State $\hat{Q}_a.BV = \emptyset$
\State $GP_a = GP.init(\mu = \frac{Rmax}{1-\gamma}, k(\cdot,\cdot))$
\EndFor
\For   { $t \in [0,T]$ }
\State $a_t = \arg\!\max\limits_a \hat{Q}(s,a)$
\State //take action $a_t$ in state $s_t$, observe $(s_{t+1},r_{t})$
\State $(s_{t+1},r_{t}) = Env(s_t,a_t)$
\State $q_t = r_t + \gamma \max\limits_a \hat{Q}(s_{t+1})$
\State $\sigma_1^2 = GP_{a_t}.variance(s_t)$
\State //If the new sample is not well covered by $GP_{a_t}$
\If    {$\sigma_1^2 > \sigma_{tol}^2$}
\State $GP_{a_t}.update(s_t,q_t)$
\EndIf
\State $\sigma_2^2 = GP_{a_t}.variance(s_t)$
\State //If the $GP_{a_t}$ now well covers a previously unknown state and the new approximation is $2\epsilon$ less than what is found in $\hat{Q}$ (i.e. is a less optimistic estimate).
\If    {$\left\{\sigma_1^2 > \sigma_{tol}^2 \ge \sigma_2^2\right\}$  \textbf{and} \par
        $\left\{\hat{Q}_{a_t}(s_t) - GP_{a_t}.mean(s_t) > 2\epsilon\right\}$}
\State $\mu = GP_{a_t}.mean(s_t) + \epsilon$
\State $\hat{Q}_{a_t}.BV.add((s_t,a_t),\mu)$
\For {$((s_j,a_t),\mu_j) \in \hat{Q}_{a_t}.BV$}
\If {$\mu_j \le \mu + L_{Q} d((s_t,a_t),(s_j,a_t))$}
\State $\hat{Q}_{a_t}.BV.delete(((s_j,a_t),\mu_j))$
\EndIf
\EndFor
\State //To prevent slow learning or halted learning reset the current GPs and initialize to the current estimates.
\State $\forall a \in A, GP_{a} = GP.init(\hat{\mu} = \hat{Q_a}, k(\cdot,\cdot))$
\EndIf
\EndFor
\end{algorithmic}
\end{algorithm}

\newcommand{\featcounts}{\mathbf{f}}

\begin{algorithm}
\caption{IsKnown($s,a$)}\label{alg_known}
\begin{algorithmic}[1]
\State // Uses the minimum count of the features for a state, to decide if $s,a$ is known
\State // If $a$ not given, sums over all $a$
\State $m = 5$
\If{$a$ not given}
\State $\featcounts \gets \sum_{a} \counts(\xvec_{s,a}) \in \RR^\xdim$
\Else
\State $\featcounts \gets \counts(\xvec_{s,a}) \in \RR^\xdim$
\EndIf
\If{$\min(\featcounts) > m$}
\State \Return ``Known''
\Else
\State \Return ``Not Known''
\EndIf
\end{algorithmic}
\end{algorithm}


\newcommand{\gmax}{G_{\text{max}}}

\begin{algorithm}
\caption{Incremental LSPI-Rmax($m$)}\label{alg_lspi}
\begin{algorithmic}[1]
\State $ \Amat \gets \zerovec$, $\bvec \gets \zerovec$, $\zvec \gets \zerovec$, $\weights \gets \zerovec$,
\State $\Bmat \gets \eye$, $\counts \gets \zerovec$
\State  $\eta=10^{-4}$, $\beta = 0.001$, $\lambda = 0$, $\gmax = \rmax/(1-\gamma)$ if continuing or $\gamma \neq 1$, else $\gmax = \rmax h$ for a predicted maximum episode length (e.g., $h = 10000$).
\State $\xvec_{s,\cdot} \gets$ initial state-action features, for any action
\State $a \gets$ greedy action according to value estimates given by $\xvec_{s,a}^\top\weights$
\Repeat
\State Take action $a$ and observe $\xvec_{s', \cdot}$ and $r$, and $\gamma$
\State $a' \gets$ greedy action according to value estimates given by $\xvec_{s',a'}^\top\weights$
\State $\zvec \gets \gamma\lambda\zvec + \xvec_{s,a}$
\If{IsKnown($s,a$)}
	\If{IsKnown($s'$)}
		\State $\Amat\gets (1-\beta)\Amat + \beta \zvec (\xvec_{s,a} - \gamma \xvec_{s',a'})^\top$
		\State $\bvec \gets (1-\beta) \bvec + \beta r\zvec$
	\Else
		\State $\Amat\gets (1-\beta)\Amat + \beta \xvec_{s,a} \xvec_{s,a}^\top$
		\State $\bvec \gets (1-\beta) \bvec + \beta (r + \gamma \gmax) \xvec_{s,a}$
	\EndIf
\Else
	\State $\Amat\gets (1-\beta)\Amat + \beta \xvec_{s,a} \xvec_{s,a}^\top$
	\State $\bvec \gets (1-\beta) \bvec + \beta \gmax \xvec_{s,a}$
\EndIf
\For{$\forall \tilde{a} \in A \backslash a$}
	\If{$!\text{IsKnown}(s,\tilde{a})$}
		\State $\Amat\gets (1-\beta)\Amat + \beta \xvec_{s,\tilde{a}} \xvec_{s,\tilde{a}}^\top$
		\State $\bvec \gets (1-\beta) \bvec + \beta \gmax \xvec_{s,\tilde{a}}$
	\EndIf
\EndFor
\State $\counts \gets \counts + \xvec_{s,a}$
\State $\alpha = \min\left\{1.0,\frac{0.01}{||\Amat||_F^2||\xvec_{s,a}||_2^2+1.0}\right\}$
\State $\Bmat \gets \Bmat - \alpha\Amat(\Amat^\top\Bmat\xvec_{s,a}-\xvec_{s,a})\xvec_{s,a}^\top$
\State $\wvec \gets\wvec +  (\Bmat + \eta \eye) (\bvec - \Amat \wvec)$
\State $\xvec_{s,a} \gets \xvec_{\svec',a'}$ \ \ \text{ and } \ \ $a \gets a'$
\Until{agent done interaction with environment}
\end{algorithmic}
\end{algorithm}

\begin{algorithm}
\caption{LSTD($\lambda$) with Sherman-Morrison update}\label{alg_LSTD_SM}
\begin{algorithmic}[1]
\State $ \Amat^\inv \gets \frac{1}{\eta} \eye$, $\bvec \gets \zerovec$, $\zvec \gets \zerovec$, $\weights \gets \zerovec$,
\State $\xvec_{s,\cdot} \gets$ initial state-action features, for any action
\State $a \gets$ $\epsilon$-greedy action according to value estimates given by $\xvec_{s,a}^\top\weights$
\Repeat
\State Take action $a$ and observe $\xvec_{s', \cdot}$ and $r$, and $\gamma$
\State $a' \gets$ $\epsilon$-greedy action according to value estimates given by $\xvec_{s',a'}^\top\weights$
\State $\zvec \gets \gamma\lambda\zvec + \xvec_{s,a}$
\State $\beta = \tfrac{1}{t}$
\State $\bvec \gets \bvec + \beta(r\zvec - \bvec)$
\State $\vvec \gets \left((\xvec_{s,a} - \gamma \xvec_{s',a'})^\top \Amat^\inv \right)^\top$
\State $\Amat^\inv \gets  \frac{1}{(1-\beta)}\Amat^\inv - \frac{\frac{\beta}{(1-\beta)}\Amat^\inv \zvec \vvec^\top}{(1-\beta) + \beta \vvec^\top \zvec}$
\State $\weights \gets \Amat^\inv \bvec$
\State $\xvec_{s,a} \gets \xvec_{\svec',a'}$ \ \ \text{ and } \ \ $a \gets a'$
\Until{agent done interaction with environment}
\end{algorithmic}
\end{algorithm}

\begin{algorithm}
\caption{LSTD($\lambda$) with Conjugate Gradient}\label{alg_LSTD_CG}
\begin{algorithmic}
\State $ \Amat \gets \zerovec$, $\bvec \gets \zerovec$, $\zvec \gets \zerovec$, $\weights \gets \zerovec$,
\State $\xvec_{s,\cdot} \gets$ initial state-action features, for any action
\State $a \gets \epsilon$-greedy action according to value estimates given by $\xvec_{s,a}^\top\weights$
\Repeat
\State Take action $a$ and observe $\xvec_{s', \cdot}$ and $r$, and $\gamma$
\State $a' \gets \epsilon$-greedy action according to value estimates given by $\xvec_{s',a'}^\top\weights_{a'}$
\State $\zvec \gets \gamma\lambda\zvec + \xvec_{s,a}$
\State $\beta_t = \tfrac{1}{t}$ \hfill $\triangleright$ or constant such as $\beta_t = 0.01$
\State $\bvec \gets \bvec + \beta_t(r\zvec - \bvec)$
\State $\Amat\gets \Amat + \beta_t (\zvec (\xvec_{s,a} - \gamma \xvec_{s',a'})^\top-\Amat)$
\State $\wvec \gets \text{ConjugateGradient}(\Amat + \etasm\eye, \bvec, \wvec, \etat)$
\State $\xvec_{s,a} \gets \xvec_{\svec',a'}$ \ \ \text{ and } \ \ $a \gets a'$
\Until{agent done interaction with environment}
\end{algorithmic}
\end{algorithm}

\newcommand{\rprev}{\textbf{r}}
\newcommand{\rnext}{\textbf{r}'}
\begin{algorithm}
\caption{Conjugate Gradient($\Amat, \bvec, \wvec, \etat$)}\label{alg_CG}
\begin{algorithmic}[1]
\State $\text{tol} = 0.001$
\State $\tilde{\Amat} = \Amat^T\Amat + \etat\eye$
\State $\rprev \gets \bvec - \tilde{\Amat}\wvec$
\State $\dvec \gets \rprev$
\Repeat
\State $\alpha \gets \frac{\rprev^\top\rprev}{\dvec^\top\Amat\dvec}$
\State $\wvec \gets \wvec + \alpha \dvec$
\State $\rnext\gets \rprev - \alpha \tilde{\Amat}\dvec$
\State $\beta \gets \frac{\rnext^\top \rnext}{\rprev^\top\rprev}$
\State $\dvec \gets \rnext + \beta \dvec$
\Until{CG converged ($||\rnext||_2^2 \leq \text{tol}$) or a fixed number of steps reached}
\\\Return $\wvec$
\end{algorithmic}
\end{algorithm}

\subsection{DGPQ}
Another approach to exploration is found in a model-free algorithm using gaussian processes named Delayed-GPQ (DGPQ) \citep{grande2014sample}. The pseudocode for DGPQ is in Algorithm \ref{alg_DGPQ}. Any algorithm can be used to train the Gaussian processes, and for this paper we use the same algorithm as in \citep{grande2014sample}. The initialization of this algorithm requires the maximum reward and value, but for ease of use we transform the reward signal to $r_{new} = r - R_{max}$ so the means of the gaussian processes can be initialized to zero and $V_{max} = 0$.

A major problem with DGPQ is the large number of parameters needed to be set properly. Some intuition on setting these parameters can be found in \citep{grande2014sample} as well as in algorithm \ref{alg_DGPQ}. As some guidance the width of the kernel determines how much a sample can generalize to other states, the thresholds ($\sigma^2_{tol}, \epsilon$) determine how often we swap for new experience in the set basis vectors, and the Lipschitz constant $L_Q$ tunes the tradeoff between exploration and exploitation.

\subsection{LSPI-Rmax}
LSPI-Rmax \cite{li2009online} combines LSPI \cite{lagoudakis2003least} with Rmax \cite{brafman2003rmax} for online control in continuous state-spaces. Exploration is encouraged by determining the \textit{knowness} of a transition, utilizing kernels.
LSPI algorithm is designed for a batch setting, where the LSTD solution is computed in closed form for staged batches of data. However, because it accumulates optimistic values, it can be simply converted into an online algorithm using incremental updates to the matrix $\Amat$ and $\bvec$, as done in \citet{li2009online}.

We summarize this extension in pseudocode as Algorithm \ref{alg_lspi}.
Until states become known, the algorithm estimates action-values that predict the maximum possible return; once a state becomes known, it starts to use actual rewards sampled from the environment.
To estimate the \textit{knowness} of a state under function approximation, we use feature counts. Each state has a set of active features; the active feature with the minimum count reflects an upper bound on the number of times that this state has been seen. Once a states active features have been seen frequently enough, it becomes known.

\subsection{RLSVI}
RLSVI \citep{osband2016generalization} is an algorithm that maintains a distribution over the possible value functions. The value functions are assumed to be linearly parametrized. While the main algorithm proposed uses a finite-horizon assumption, a modified version proposed in the Appendix of the paper does not, and this is the version used in the experiments here.

\section{Alternative updates for LSTD}

The update for $\wvec$ using $\Amat$ and $\bvec$ in UCLS is the result of an empirical investigation into alternative linear system solvers.
We investigated using a Sherman-Morrison update, with exponential averaging (in Algorithm \ref{alg_LSTD_SM}) as well as improved incremental inverse updates, including one for pseudo-inverses \citep{meyer1973generalized}. This update has a confounding role for $\eta$, and for small $\eta$ we found it less stable than our proposed update.
We investigated iterative updates with a fixed stepsize, $\wvec_{t+1} = \wvec_{t} + \stepsize (\bvec_t - \Amat_t \wvec)$; the addition of the step-size, however, removes some of the parameter-free benefits of LSTD.
We investigated conjugate gradient updates, as in Algorithm \ref{alg_LSTD_CG}.
We finally derived the iterative update proposed, for $\Bmat \approx \Amat^\invt$, to obtain a preconditioner for the iterative update.


For completeness, we include the derivation for the Sherman-Morrison update.
The derivation for $\Amat_{t+1}^{-1}$  using $\Amat_{t+1} = (1-\beta)\Amat_t + \beta u v ^T$ is as follows:
\begin{align*}
\Amat_{t+1}^\inv &= ((1-\beta)\Amat_t + \beta u v ^T)^\inv \\
&\stackrel{(1)}{=} \frac{1}{1-\beta} \Amat_t^\inv - \frac{{\frac{\beta}{(1-\beta)^2} \Amat_t^{-1}u v^T \Amat_t^{-1}}}{1+tr\big(\frac{\beta}{1-\beta} u v^\top \Amat_t^\inv \big)}  \\
&\stackrel{(2)}{=} \frac{1}{1-\beta} \Amat_t^\inv - \frac{\frac{\beta}{1-\beta} \Amat_t^{-1}u v^T \Amat_t^{-1}}{1-\beta + \beta v^\top\Amat_t^\inv u} \\
&= \frac{1}{1-\beta} \bigg(\Amat_t^\inv - \frac{\beta \Amat_t^{-1}u v^T \Amat_t^{-1}}{1-\beta + \beta v^\top\Amat_t^\inv u} \bigg)
\end{align*}
where step (1) utilizes a Lemma in \citep{miller1981inverse}, and step (2) utilizes $tr(u v^\top \Amat_t^\inv) = v^\top \Amat_t^\inv u$. 

\begin{figure*}[!htb]
\includegraphics[width=\textwidth ,scale=0.25]{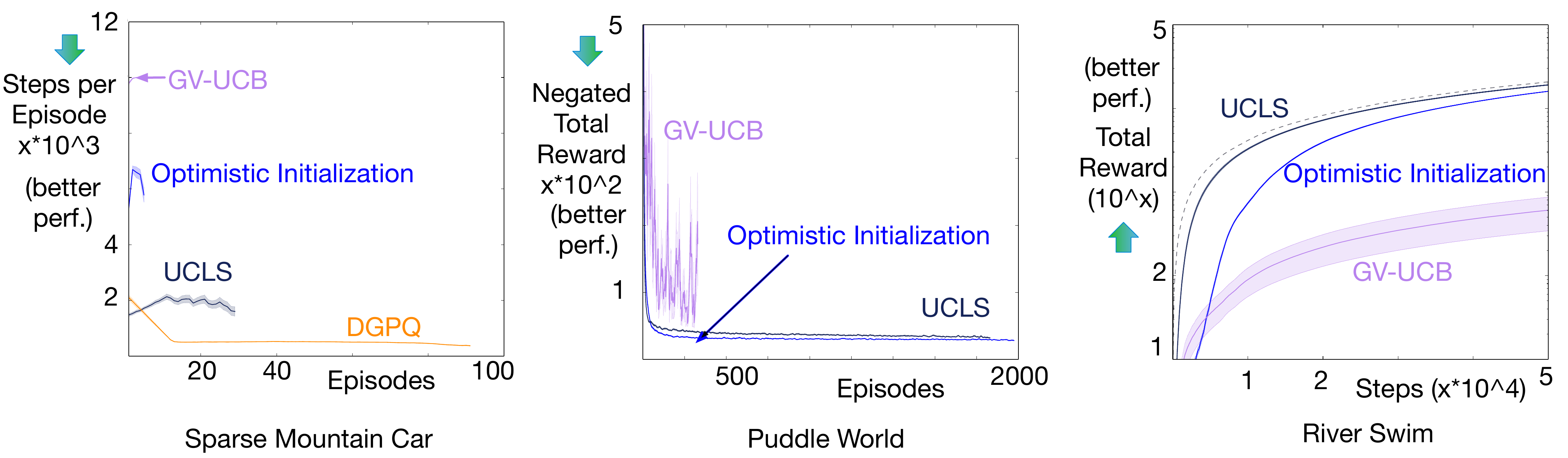}
\caption{Learning curves in the three domains comparing UCLS to additional methods. In the first two plots lower on y-axis indicates better performance, whereas in the right-most plot higher along y-axis is better.}
\label{fig:LC-APP}
\end{figure*}

\section{Extended results}\label{extResults}

We show additional results here comparing UCLS to Sarsa with optimistic initialization and GV-UCB (p=0.5), Figure \ref{fig:LC-APP}; along with DGPQ in MCSparse. Also included are plots showing best and worst runs for UCLS and DGPQ --- the two closest competitors --- to show the variance of each algorithm Figure \ref{fig:RUNS}. Additionally, to empirically reinforce the utility of contextual confidence interval radius (CIR) over global CIR, we evaluate the policies obtained by UCLS and GV-UCB after 50,000 learning samples in River Swim and present the results in Figure \ref{fig:CIRevaluation}.

As mentioned in the main results, Sarsa with optimistic initialization performs remarkably well in these domains. In Sparse Mountain Car, as DGPQ converts the sparse reward dynamics to a dense one, it outperforms Sarsa with optimistic initialization as well.  In Puddle World, UCLS matches up to Sarsa's policy. In River Swim UCLS experiences minimal regret when compared to Sarsa's control policy. With the loss of contextual variance estimates GV-UCB explores the complete state space more thoroughly, and therefore performs poorly. The explicit upper confidence bound given by UCLS does not suffer from this, and sufficiently explores the domain to converge to an optimal policy without excessively exploring. For regions where there is low variance, the upper-confidence-bound converges more quickly to zero, whereas it remains higher in regions of uncertainty. Therefore, contextual variance estimates provide the flexibility of variable convergence based on the variance of the region, and global variance estimates decay too slowly. When the policies obtained by GV-UCB and UCLS are evaluated, it is clear that the policy obtained by UCLS is much closer to the optimal policy than the policy obtained by GV-UCB, showing that the exploration strategy used by UCLS is more \textit{data efficient}.

\includegraphics[width=\columnwidth]{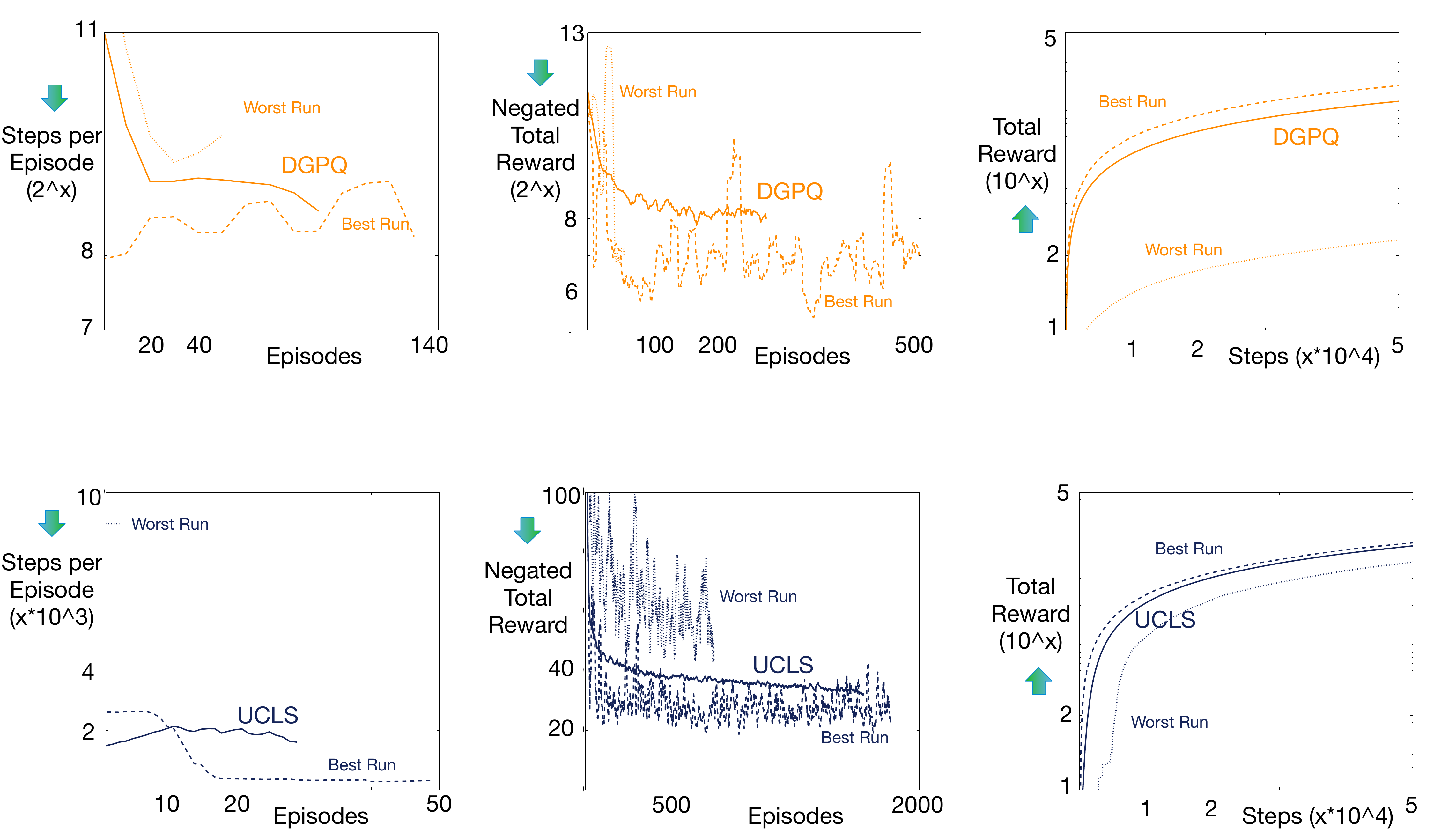}
\captionof{figure}{Best and worst run curves for DGPQ(top) and UCLS(bottom). From left to right: Sparse Mountain Car, Puddle World, River Swim.
\label{fig:RUNS}}
\hfill
\includegraphics[width=\columnwidth]{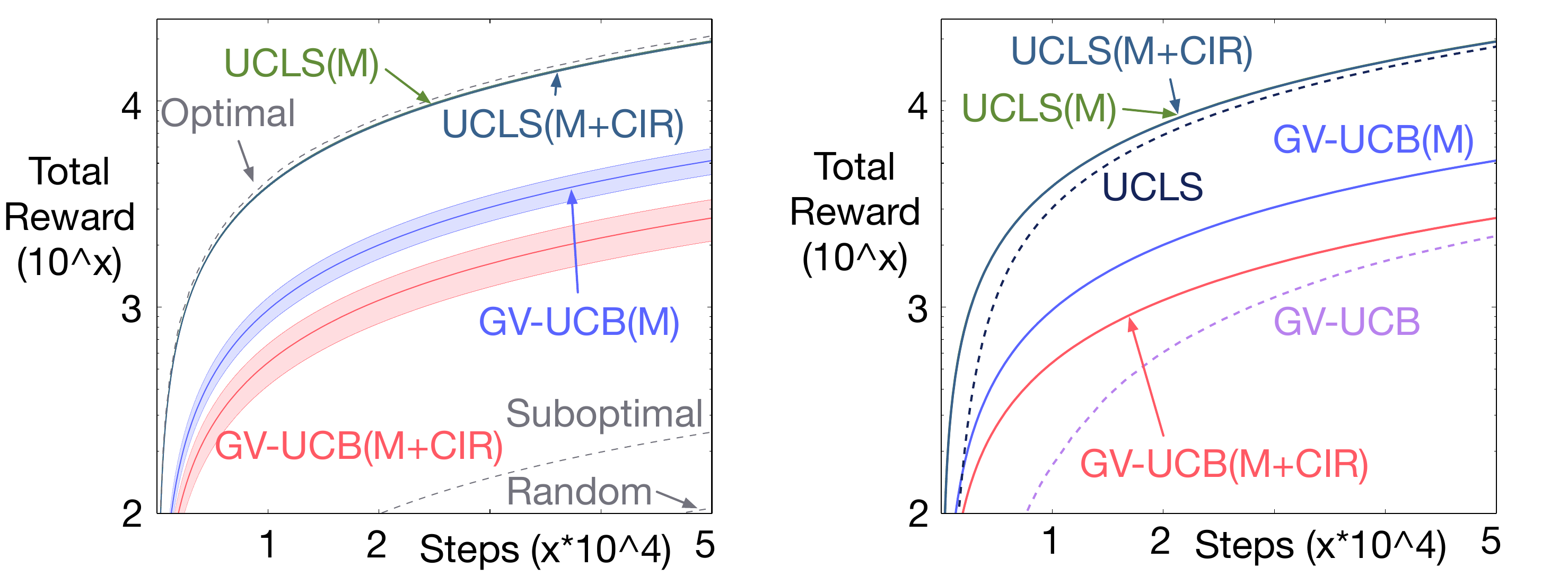}
\captionof{figure}{Policy evaluation plots comparing variations of final policy obtained by UCLS ($p=0.1$) and GV-UCB ($p=10e{-5}$) after 50,000 learning steps in River Swim. Policies with (M) indicate greedy policy w.r.t. mean estimates, whereas policies with (M+CIR) indicate greedy policies w.r.t. (mean + CIR) estimates. In the left plot it can be seen that UCLS(M) and UCLS(M+CIR) perform almost as well as the optimal policy, whereas both versions of GV-UCB are still sub-optimal in many parts of the state space. Additionally, the overlap of UCLS(M) and UCLS(M+CIR) indicates that contextual CIR fades faster than global CIR, and is a more data-efficient exploration strategy.
The right plot helps contrast the final policies obtained to the actual control policy used during learning (indicated by just UCLS and GV-UCB).
\label{fig:CIRevaluation}}

\end{document}